\def\eqref#1{equation~\ref{#1}}
\def\1{\bm{1}}
\def\rvx{{\mathbf{x}}}
\DeclareMathAlphabet{\mathsfit}{\encodingdefault}{\sfdefault}{m}{sl}
\SetMathAlphabet{\mathsfit}{bold}{\encodingdefault}{\sfdefault}{bx}{n}
\def\gX{{\mathcal{X}}}
\def\sR{{\mathbb{R}}}
\newtheorem{mythm}{Theorem 3.2}
\newcommand{\bW}{\mathbf{W}}
\newcommand{\bmu}{\bm u}
\newcommand{\bmv}{{\bm v}}
\newcommand{\set}[1]{\{#1\}}
\DeclareMathOperator{\lip}{Lip}
\newcommand{\bX}{\mathbf{X}}
\newcommand{\bP}{\mathbf{P}}
\newcommand{\bJ}{\mathbf{J}}
\newcommand{\A}{\mathbf{A}}
\newcommand{\J}{\mathbf{J}}
\newcommand{\x}{\mathbf{x}}
\newcommand{\dd}{\boldsymbol{\delta}}
\newcommand{\tth}{\boldsymbol{\theta}}
\newcommand{\X}{\mathbf{X}}
\newcommand{\Wq}{\mathbf{W}^Q}
\newcommand{\Wk}{\mathbf{W}^K}
\newcommand{\Wv}{\mathbf{W}^V}
\newcommand{\method}{SpecFormer\xspace}
\begin{document}

\title{SpecFormer: Guarding Vision Transformer Robustness via Maximum Singular Value Penalization} 

\titlerunning{SpecFormer}
\author{Xixu Hu\inst{1,2}\orcidlink{0000-0002-3899-9791},
Runkai Zheng\inst{3}\orcidlink{0000-0003-3120-5466},
Jindong Wang\inst{2,4}\orcidlink{0000-0002-4833-0880}\textsuperscript{(\Letter)},
Cheuk Hang Leung\inst{1}\orcidlink{0000-0002-3911-9055}, Qi Wu\inst{1}\orcidlink{0000-0002-4028-981X}\textsuperscript{(\Letter)}, Xing Xie\inst{2}\orcidlink{0000-0002-8608-8482}}

\authorrunning{X. Hu et al.}

\institute{City University of Hong Kong, Kowloon, Hong Kong SAR
\and
Microsoft Research Asia
\and
The Chinese University of Hong Kong (Shenzhen)
\and
William \& Mary\\
\email{qi.wu@cityu.edu.hk, jindong.wang@microsoft.com}}

\maketitle

\begin{abstract}
  Vision Transformers (ViTs) are increasingly used in computer vision due to their high performance, but their vulnerability to adversarial attacks is a concern. Existing methods lack a solid theoretical basis, focusing mainly on empirical training adjustments. This study introduces \textbf{\method}, tailored to fortify ViTs against adversarial attacks, with theoretical underpinnings. We establish local Lipschitz bounds for the self-attention layer and propose the \textbf{Maximum Singular Value Penalization (MSVP)} to precisely manage these bounds By incorporating MSVP into ViTs' attention layers, we enhance the model's robustness without compromising training efficiency. SpecFormer, the resulting model, outperforms other state-of-the-art models in defending against adversarial attacks, as proven by experiments on CIFAR and ImageNet datasets. Code is released at \url{https://github.com/microsoft/robustlearn}.
  \keywords{Vision Transformer \and Adversarial Robustness \and Lipschitz Continuity}
\end{abstract}

\section{Introduction}
Vision Transformer (ViT)~\cite{dosovitskiy2020image} has gained increasing popularity in computer vision.
Owing to its superior performance, ViT has been widely applied to image classification~\cite{xu2021cdtrans}, object detection~\cite{carion2020end}, semantic segmentation~\cite{strudel2021segmenter}, and video understanding~\cite{arnab2021vivit}.
More recently, the popular vision foundation models~\cite{radford2021learning,dehghani2023scaling} also adopt ViT as the basic module. Unlike CNNs~\cite{simonyan2014very}, a classical vision backbone, ViT initially divides images into non-overlapping patches and leverages the self-attention mechanism~\cite{vaswani2017attention} for feature extraction.

Despite its popularity, security concerns related to ViT have recently surfaced as a critical issue.
Studies have demonstrated that ViT is vulnerable to malicious attacks~\cite{fu2022patch,lovisotto2022give}, which compromises its performance and system security.
Adversarial examples, which are created by adding trainable perturbations to the original inputs to produce incorrect outputs, are one of the major threats in machine learning security.
Different attacks, such as FGSM~\cite{goodfellow2014explaining}, PGD~\cite{madry2017towards}, and CW attack~\cite{carlini2017towards}, have significantly impeded neural networks including both CNNs~\cite{simonyan2014very,szegedy2015going,he2016deep} and ViT~\cite{dosovitskiy2020image,touvron2021training,d2021convit}.

This work aims at improving the adversarial robustness of ViT.
Prior arts~\cite{bhojanapalli2021understanding,naseer2021intriguing,bai2021transformers} have empirically investigated the intrinsic robustness of ViT compared with CNNs.
\cite{bhojanapalli2021understanding} found out that when pre-trained with a sufficient amount of data, ViTs are at least as robust as the CNNs~\cite{simonyan2014very} on a broad range of perturbations.
\cite{bai2021transformers} stated that CNNs can be as robust as ViT against adversarial attacks if they properly adopt Transformers’ training recipes.
Furthermore, \cite{shao2021adversarial,paul2022vision} used frequency filters to discover that the success of ViT' robustness lies in its lower sensitivity to high-frequency perturbations. But later works~\cite{fu2022patch,lovisotto2022give,wang2022can} suggest that adding an attention-aware loss to ViTs can manipulate its output, resulting in lower robustness than CNNs.

Other than the robustness analysis of ViT, there are some empirical and theoretical efforts in designing algorithms to enhance its robustness.
Empirically, \cite{debenedetti2022light} proposed a modified adversarial training recipe of more robust ViT by omitting the heavy data augmentations used in standard training.
\cite{mo2022adversarial} found that masking gradients from attention blocks or masking perturbations on some patches during adversarial training can greatly improve the robustness of ViT.
While prior arts have mainly focused on improvements from an empirical perspective, they do not pay attention to the underlying theoretical principles governing self-attention and model robustness.

Some recent research \cite{zhou2022understanding,wang2022deepnet,qi2023lipsformer} have investigated the theoretical properties of the self-attention mechanisms in ViTs.
However, they focused on studying the stability during training \cite{wang2022deepnet,takase2022layer} without establishing an explicit link to robustness.
While some scattered understandings about the robustness of attention mechanisms do exist, their applicability and reliability remain unclear.
Therefore, it is imperative to conduct a comprehensive exploration on the robustness of ViT from \emph{both} the theoretical and empirical perspectives.
By doing so, we can possess a more thorough understanding of what influences the robustness of ViT, which is essential for the continued development of powerful and reliable deep learning models, especially large foundation models.

In this work, we propose \method, a simple yet effective approach to enhance the adversarial robustness of ViT.
Concretely, we first provide a rigorous theoretical analysis of model robustness from the perspective of Lipschitz continuity~\cite{perturbation}.
Our analysis shows that we can easily control the Lipschitz continuity of self-attention by adding additional penalization. \method seamlessly integrate our proposed Maximum Singular Value Penaliztaion (MSVP) algorithm into each attention layer to help improve model stability.
We further adopt the power iteration algorithm~\cite{burden2015numerical} to accelerate optimization.
Our approach is evaluated on four public datasets, namely, CIFAR-10/100~\cite{krizhevsky2009learning}, ImageNet~\cite{deng2009imagenet} and Imagenette~\cite{Jere2019imagenette} across different ViT variants.
Our \method achieves superior performance in \emph{both} clean and robust accuracy. Under standard training, our approach improves robust accuracy by $\mathbf{8.96}$\%, $\mathbf{3.49}$\%, and $\mathbf{2.36}$\% against FGSM~\cite{goodfellow2014explaining}, PGD~\cite{madry2017towards}, and AutoAttack~\cite{croce2020reliable}, respectively. Under adversarial training, \method outperforms the best counterparts by $\mathbf{1.69}$\%, $\mathbf{1.26}$\%, and $\mathbf{0.50}$\% on CW~\cite{carlini2017towards}, PGD~\cite{madry2017towards}, and AutoAttack~\cite{croce2020reliable}. The clean accuracy is also improved by $\mathbf{2.20}\%,\text{ and }\mathbf{3.06}\%$ on average in both settings.



    
    

\section{Related Work}
\subsection{Adversarial Robustness}\label{sec:relate}
The early work of adversarial robustness~\cite{szegedy2013intriguing} discovered that although deep networks are highly expressive, their learned input-output mappings are fairly discontinuous. 
Therefore, people can apply an imperceptible perturbation that maximizes the network's prediction error to cause misclassification.
Since then, a vast amount of research~\cite{goodfellow2014explaining,nguyen2015deep,papernot2016towards} has been done in adversarial attack~\cite{carlini2017towards,chen2018ead}, defense~\cite{papernot2016distillation,xie2017mitigating}, robustness~\cite{zheng2016improving}, and theoretical understanding~\cite{pang2022robustness,zhang2019theoretically}.

The problem of overall adversarial robustness can be formulated as a min-max optimization, $\min_{\tth}\max_{\dd}\mathcal{L}(f(\mathbf{x}+\dd),y)$.
The solution to the inner maximization problem w.r.t the input perturbations $\dd$ corresponds to generating adversarial samples~\cite{goodfellow2014explaining,madry2017towards}, while the solution to the outer minimization problem w.r.t the model parameters $\tth$ corresponds to adversarial training~\cite{tramer2017ensemble,bai2021recent}, which is an irreplaceable method for improving adversarial robustness and plays a crucial role in defending against adversarial attacks.

Existing attack approaches focus on deriving more and more challenging perturbations $\dd$ to explore the limits of neural network adversarial robustness.
For instance, Fast Gradient Sign Method (FGSM)~\cite{goodfellow2014explaining} directly take the sign of the derivative of the training loss w.r.t the input perturbation as $\dd\leftarrow \dd+\alpha \operatorname{sign}\left(\nabla_{\dd} \mathcal{L}(f_\theta(\x+\dd), y)\right)$.
Projected Gradient Descent (PGD) method~\cite{madry2017towards} updates the perturbations by taking the sign of the loss function derivative w.r.t the perturbation, projecting the updated perturbations onto the admissible space, and repeating multiple times to generate more powerful attacks: $$\dd \leftarrow \Pi_\epsilon\left(\dd+\alpha \cdot \operatorname{sign}\left(\nabla_{\dd} \mathcal{L}\left(f_{\tth}(\x+\dd), y\right)\right)\right).$$
\subsection{Robustness of Vision Transformer}
Regarding the robustness of Vision Transformers, the first question people are curious about is how they compare to the classic CNN structure in terms of robustness.
To address this question, numerous papers~\cite{szegedy2013intriguing,shao2021adversarial,bhojanapalli2021understanding,bai2021recent} have conducted thorough evaluations of both visual model structures, analyzed possible causes for differences in robustness, and proposed solutions to mitigate the observed gaps. Building upon these understandings, \cite{zhou2022understanding} developed fully attentional networks (FANs) to enhance the robustness of self-attention mechanisms. They evaluate the efficacy of these models in terms of corruption robustness for semantic segmentation and object detection, achieving state-of-the-art results. Additionally, \cite{debenedetti2022light} compared the DeiT~\cite{touvron2021training}, CaiT~\cite{touvron2021going}, and XCiT~\cite{ali2021xcit} ViT variants in adversarial training and discovered that XCiT was the most effective. This discovery sheds light on the idea that Cross-Covariance Attention could be another viable option for improving the adversarial robustness of ViTs.

In terms of the theoretical understanding of the relationship between Transformers and robustness, \cite{zhou2022understanding} attribute the emergence of robustness in Vision Transformers to the connection of self-attention mechanisms with information bottleneck theory. This suggests that the stacking of attention layers can be broadly regarded as an iterative repeat of solving an information-theoretic optimization problem, which promotes grouping and noise filtering. \cite{dasoulas2021lipschitz} analyzed that the norm of the derivative of attention models is directly related to the uniformity of the softmax probabilities. If all attention heads have uniform probabilities, the norm will reach its minimum; if the whole mass of the probabilities is on one element, the norm will reach its maximum. Considering a linear approximation of the attention mechanism, the smaller the derivative norm, the tighter the changes when perturbations are introduced, thus enhancing the robustness of the Transformer. These findings have been supported by another paper~\cite{lovisotto2022give} that proposes an attention-aware loss to deceive the predictions of Vision Transformers. Their attack strategy exactly misguides the attention of all queries towards a single key token under the control of an adversarial patch, corresponding to the maximum derivative norm case described above.

\section{Preliminaries}
Denote a clean dataset as $\mathcal{D}=\{\x_i, y_i\}_{i=1}^N$, where $\x$ and $y$ are the input and ground-truth label respectively, and denote the loss function as $\mathcal{L}$.
We aim to investigate the robustness of the Vision Transformer (ViT) under two different training paradigms: standard supervised training and adversarial training (AT)~\cite{madry2017towards}.
In standard training, we aim to learn a classification function $f_{\tth}$ parameterized by $\tth$, where the optimal parameter is $\scriptstyle \tth^{*} = \operatorname{\arg \min}_{\tth} \mathds{E}_{(\x,y) \in \mathcal{D}}~\mathcal{L}(f_{\tth}(\x), y)$.

While standard training is shown to be less resilient to adversarial attacks, adversarial training (AT) is a major and effective paradigm for enhancing adversarial robustness. AT aims to improve a model's ability to resist malicious attacks by solving a min-max optimization problem. For the inner optimization, it generates adversarial examples that are as powerful as possible to deceive the model predictions, i.e., $f_{\tth}(\x+\dd) \neq y$. For the outer optimization, it adjusts the model parameters to minimize the classification error caused by the perturbed examples, resulting in enhanced robustness:
\begin{equation}
\label{eq-at}
\tth^* = \operatorname{\arg \min}_{\tth} \mathds{E}_{(\x,y) \in \mathcal{D}} \max_{\|\dd\|_{2} \leq \delta_0}\mathcal{L}\left(f_{\tth}(\x+\dd), y\right),
\end{equation}
where $\delta_0$ bounds the magnitude of $\dd$ to prevent unintended semantic changes caused by the perturbation.

This work aims to improve the adversarial robustness of ViT in \emph{both} AT and non-AT scenarios.
We show that by adding a slight penalization, its robustness can be greatly enhanced in both settings.
\section{Theoretical Analysis}
\label{sec-theory}
Bounding the \textit{global} Lipschitz constant of a neural network is a commonly used method to provide robustness guarantees~\cite{cisse2017parseval,leino2021globally}. However, the global Lipschitz bound can be loose because it needs to hold for all points in the input domain, including inputs that are far apart. This can greatly reduce clean accuracy in empirical comparisons~\cite{huster2019limitations, madry2017towards}. Conversely, a local Lipschitz constant bounds the norm of output perturbations for inputs within a small region, typically selected as a neighborhood around each data point. This aligns perfectly with the scenario of adversarial robustness, as discussed in Section~\ref{sec:relate}, where perturbations attempt to affect the model's output within a budget constraint. Local Lipschitz bounds are superior because they produce tighter bounds by considering the geometry in a local region, often leading to much better robustness~\cite{hein2017formal,zhang2019recurjac}.

The core of our study is to bridge the gap between local Lipschitz continuity and adversarial robustness in ViTs.
Primary distinctions of ViT lie in the LayerNorm and self-attention layers.
The seminal work \cite{kim2021lipschitz} proved the dot-product is not globally Lipschitz continuous and the LayerNorm is Lipschitz continuous~\cite{kim2021lipschitz}.
Therefore, we only need to modify that concept and prove that the dot-product self-attention is local Lipschitz continuous.
By utilizing the local Lipschitz continuity, the adversarial robustness can be strengthened.

\begin{definition}[Local Lipschitz Continuity]
    Suppose $\gX\subseteq\sR^d$ is open. A function, denoted as $f$, is considered locally Lipschitz continuous with respect to the $p$-norm, denoted as $\|\cdot\|_p$, if, for any given point $\mathbf{x}_0$, there exists a positive constant $C$ and a positive value $\delta_0$ such that whenever $\|\rvx - \rvx_0\|_p < \delta_0$, the following condition holds:
    \begin{equation}
    \label{eq-lipschitz}
    \left\|f\left(\mathbf{x}\right)-f\left(\mathbf{x}_0\right)\right\|_p \leq C\left\|\mathbf{x}-\mathbf{x}_0\right\|_p.
    \end{equation}
\end{definition}
The smallest value of $C$ that satisfies the condition is called the local Lipschitz constant of $f$. From Eq.~(\ref{eq-lipschitz}), we observe that a classifier exhibiting local Lipschitz continuity with a small $C$ experiences less impact on output predictions when subjected to budget-constrained perturbations.

In this study, we harness the concept of local Lipschitz continuity to safeguard ViT against malicious attacks.
Our primary focus is on ensuring that the attention layer maintains Lipschitz continuity in the vicinity of each input, and we employ optimization objectives to strengthen this property.
This strategic approach enables us to bolster the output stability of ViT when facing adversarial attacks. We provide the formal definition of the local Lipschitz constant and the method for its calculation below.
\begin{definition}[Local Lipschitz Constant]
    The $p$-local Lipschitz constant of a network $f(\rvx)$ over an open set $\gX\subseteq\sR^d$ is defined as:
    \begin{equation}
     \lip_{p}(f,\gX)=\sup_{\substack{\rvx_1, \rvx_2 \in \gX\\ \rvx_1\neq \rvx_2 }} \frac{\|f(\rvx_1)-f(\rvx_2)\|_p}{\|\rvx_1-\rvx_2\|_p}.
     \label{eq:def_lip_const}  
    \end{equation} 
\end{definition}

If $f$ is smooth and $p$-local Lipschitz continuous over $\gX$, the Lipschitz constant can be computed by upper bounding the norm of Jacobian.

\begin{theorem}[Calculation of Local Lipschitz Constant~\cite{federer1969geometric} ]\label{thm:jacobian} Let $f: \gX \rightarrow \mathbb{R}^m$ be differentiable and locally Lipschitz continuous under a choice of $p$-norm $\|\cdot\|_p$. 
Let $\J_f(x)$ denote its total derivative (Jacobian) at $\x$. Then, 
\begin{equation}
    \lip_p(f,\gX) = \sup_{\mathbf{x}\in \gX} \|\J_f(\mathbf{x})\|_p,
\end{equation}
where $\|\J_f(\mathbf{x})\|_p$ is the induced operator norm on $\J_f(\mathbf{x})$. 
\end{theorem}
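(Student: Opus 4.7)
The plan is to prove the identity by establishing the two inequalities $\sup_{\mathbf{x}\in \gX} \|\J_f(\mathbf{x})\|_p \leq \lip_p(f,\gX)$ and $\lip_p(f,\gX) \leq \sup_{\mathbf{x}\in \gX} \|\J_f(\mathbf{x})\|_p$ separately. The first is the derivative-to-slope direction and follows almost directly from the definition of the Fréchet derivative; the second is the slope-to-derivative direction and requires integrating the Jacobian along a path, which is where the geometry of $\gX$ enters.

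For the first inequality, I would fix any $\mathbf{x}_0 \in \gX$ and any unit vector $\mathbf{v} \in \mathbb{R}^d$ with $\|\mathbf{v}\|_p = 1$. Since $\gX$ is open, $\mathbf{x}_0 + t\mathbf{v} \in \gX$ for all sufficiently small $t > 0$, and by definition of the Jacobian,
\begin{equation*}
\J_f(\mathbf{x}_0)\mathbf{v} \;=\; \lim_{t\to 0^+} \frac{f(\mathbf{x}_0 + t\mathbf{v}) - f(\mathbf{x}_0)}{t}.
\end{equation*}
The local Lipschitz hypothesis gives $\|f(\mathbf{x}_0+t\mathbf{v})-f(\mathbf{x}_0)\|_p \leq \lip_p(f,\gX)\,t$ for all small $t$, so passing to the limit yields $\|\J_f(\mathbf{x}_0)\mathbf{v}\|_p \leq \lip_p(f,\gX)$. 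Taking the supremum over unit $\mathbf{v}$ gives $\|\J_f(\mathbf{x}_0)\|_p \leq \lip_p(f,\gX)$, and then supping over $\mathbf{x}_0 \in \gX$ finishes this direction.

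For the reverse inequality, I would pick arbitrary $\mathbf{x}_1, \mathbf{x}_2 \in \gX$ and connect them by a path that stays in $\gX$; the cleanest setting (and the one implicit in the way the paper applies this result to small neighborhoods around each input) is to assume the line segment $\gamma(t) = \mathbf{x}_1 + t(\mathbf{x}_2 - \mathbf{x}_1)$, $t \in [0,1]$, lies inside $\gX$. Defining $g(t) = f(\gamma(t))$ and applying the fundamental theorem of calculus componentwise gives $f(\mathbf{x}_2) - f(\mathbf{x}_1) = \int_0^1 \J_f(\gamma(t))(\mathbf{x}_2 - \mathbf{x}_1)\,dt$. Using the triangle inequality for Bochner integrals, the operator-norm bound $\|\J_f(\gamma(t))(\mathbf{x}_2-\mathbf{x}_1)\|_p \leq \|\J_f(\gamma(t))\|_p\|\mathbf{x}_2-\mathbf{x}_1\|_p$, and then taking the supremum of the Jacobian norm over the segment yields $\|f(\mathbf{x}_2) - f(\mathbf{x}_1)\|_p \leq \sup_{\mathbf{x}\in\gX}\|\J_f(\mathbf{x})\|_p \cdot \|\mathbf{x}_2 - \mathbf{x}_1\|_p$. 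Dividing by $\|\mathbf{x}_2-\mathbf{x}_1\|_p$ and taking the supremum over distinct pairs gives the desired bound.

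The main obstacle is this path-based second direction: for a general open $\gX$, two points need not be joinable by a segment, so one must either assume $\gX$ is convex (which is harmless here because the robustness application uses balls around data points) or work with polygonal/rectifiable paths inside $\gX$, in which case one has to argue that the infimum over path lengths recovers $\|\mathbf{x}_2-\mathbf{x}_1\|_p$ on a sufficiently small scale. A secondary subtlety is the measurability and integrability of $t \mapsto \J_f(\gamma(t))$, which is justified by the continuity of $\J_f$ when $f$ is $C^1$; for the merely differentiable case assumed in the statement one invokes Rademacher's theorem and the standard Federer argument cited in the excerpt, which handles the measure-zero exceptional set where $\J_f$ may fail to exist.
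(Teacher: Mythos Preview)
The paper does not actually prove this theorem: it is stated as a classical result cited from Federer~\cite{federer1969geometric} and used as background for the subsequent analysis of the self-attention Jacobian, with no proof given in the body or appendix. Your two-direction argument (derivative bounded by Lipschitz slope via the definition of the Fr\'echet derivative; Lipschitz slope bounded by the Jacobian supremum via the fundamental theorem of calculus along a segment) is the standard proof and is correct, including your identification of the convexity assumption needed for the second direction, which is satisfied in the paper's application to $\ell_2$-balls $B_2(\mathbf{x}_0,\delta_0)$.
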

The \textit{global} Lipschitz constant, which takes into account the supremum over $\gX=\sR^d$, must ensure Eq.~(\ref{eq-lipschitz}) even for distant $\rvx$ abd $\rvx_0$, which can render it imprecise and lacking significance when examining the local behavior of a network around a single input. We concentrate on $2$-local Lipschitz constants, where $\gX=B_2(\rvx_0,\delta_0)\coloneqq\{\rvx:\|\rvx-\rvx_0\|_2\leq \delta_0 \}$ represents a small $\ell_2$-ball with a radius of $\delta_0$ centered around $\rvx_0$. 
The choice of the $2$-norm for our investigation is motivated by two key reasons.
First, the $2$-norm is the most commonly used norm in Euclidean space.
Second, and delving further into the rationale, as revealed in \cite{yoshida2017spectral}, the sensitivity of a model to input perturbations is intricately connected to the $2$-norm of the weight matrices, which in turn is closely linked to the principal direction of variation and the maximum scale change.
\begin{proposition}[Model Sensitivity and Maximum Singular Value in Linear Models\cite{yoshida2017spectral}]\label{prop1} In a small neighbourhood of $\rvx_0$, we can regard $f_{\tth}$ as a linear function: $\rvx \mapsto \mathbf{W}_{\tth, \rvx_0} \rvx+\boldsymbol{b}_{\tth, \rvx_0}$, whose weights and biases depend on $\tth$ and $\rvx_0$. For a small perturbation $\delta$, we have 
{
\begin{equation}
\resizebox{0.94\linewidth}{!}{$
\frac{\left\|f_{\tth}(\rvx_0+\dd)-f(\rvx_0)\right\|_2}{\|\dd\|_2}=\frac{\left\|\left(\mathbf{W}_{\tth, \rvx_0}(\rvx+\dd)+\boldsymbol{b}_{\tth, \rvx_0}\right)-\left(\mathbf{W}_{\tth, \rvx_0} \rvx+\boldsymbol{b}_{\tth, \rvx_0}\right)\right\|_2}{\|\dd\|_2}=\frac{\left\|\mathbf{W}_{\tth, \rvx_0} \dd\right\|_2}{\|\dd\|_2} \leq \sigma_{max}\left(\mathbf{W}_{\tth, \rvx_0}\right).
$}
\end{equation}
}
\end{proposition}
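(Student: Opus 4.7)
The plan is to establish Proposition~1 from a first-order Taylor expansion of $f_{\tth}$ about $\rvx_0$, combined with the standard fact that the operator $\ell_2$-norm of a matrix equals its maximum singular value. Under this approach, the role of $\mathbf{W}_{\tth, \rvx_0}$ is played by the Jacobian $\jac f_{\tth}(\rvx_0)$ and $\boldsymbol{b}_{\tth, \rvx_0}$ is just the constant collected so that the affine surrogate agrees with $f_{\tth}$ at $\rvx_0$.

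First I would assume $f_{\tth}$ is differentiable at $\rvx_0$ (piecewise-smooth ViT maps are differentiable off a set of measure zero, so this is generic). Setting $\mathbf{W}_{\tth, \rvx_0} := \jac f_{\tth}(\rvx_0)$ and $\boldsymbol{b}_{\tth, \rvx_0} := f_{\tth}(\rvx_0) - \mathbf{W}_{\tth, \rvx_0}\rvx_0$, Taylor's theorem gives
\[
f_{\tth}(\rvx_0 + \dd) = \mathbf{W}_{\tth, \rvx_0}(\rvx_0 + \dd) + \boldsymbol{b}_{\tth, \rvx_0} + \varepsilon(\dd),
\]
with $\|\varepsilon(\dd)\|_2 = o(\|\dd\|_2)$ as $\|\dd\|_2\to 0$. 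This is precisely the "small neighborhood" regime invoked in the statement.

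Second, substituting the linearization into the ratio and cancelling the affine pieces evaluated at $\rvx_0$ leaves, up to the negligible remainder,
\[
\frac{\|f_{\tth}(\rvx_0+\dd)-f_{\tth}(\rvx_0)\|_2}{\|\dd\|_2} \;=\; \frac{\|\mathbf{W}_{\tth, \rvx_0}\,\dd\|_2}{\|\dd\|_2}.
\]
Finally, writing the SVD $\mathbf{W}_{\tth, \rvx_0} = \mathbf{U}\mathbf{\Sigma}\mathbf{V}^\top$ and using the unitary invariance of $\|\cdot\|_2$ yields $\|\mathbf{W}_{\tth, \rvx_0}\,\dd\|_2 \le \sigma_{\max}(\mathbf{W}_{\tth, \rvx_0})\|\dd\|_2$, and dividing by $\|\dd\|_2$ produces the stated bound.

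The only real obstacle is definitional rather than computational: the chain of equalities as written is exact only in the limit $\|\dd\|_2\to 0$, so for a genuine nonzero perturbation one should read the middle equality modulo the $o(\|\dd\|_2)$ remainder of the linearization. A cleaner, fully rigorous alternative is to bypass the approximation entirely and apply Theorem~\ref{thm:jacobian}: the $2$-local Lipschitz constant over $B_2(\rvx_0, \delta_0)$ equals $\sup_{\rvx\in B_2(\rvx_0, \delta_0)}\|\jac f_{\tth}(\rvx)\|_2$, and by continuity of the spectral norm in $\rvx$ this supremum is within an arbitrarily small margin of $\sigma_{\max}(\mathbf{W}_{\tth, \rvx_0})$ once $\delta_0$ is chosen small enough, so the inequality holds for actual (not infinitesimal) perturbations.
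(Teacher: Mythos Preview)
Your argument is correct, but note that the paper does not supply a separate proof of this proposition: it is quoted from \cite{yoshida2017spectral}, and the displayed chain of equalities in the statement \emph{is} the entire argument. In other words, the paper simply takes the linearization $f_{\tth}(\rvx)\approx \mathbf{W}_{\tth,\rvx_0}\rvx+\boldsymbol{b}_{\tth,\rvx_0}$ as a hypothesis, cancels the affine terms, and invokes the definition of the spectral norm to get $\|\mathbf{W}_{\tth,\rvx_0}\dd\|_2/\|\dd\|_2\le\sigma_{\max}(\mathbf{W}_{\tth,\rvx_0})$.

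Your write-up follows the same route but is strictly more careful: you make explicit that $\mathbf{W}_{\tth,\rvx_0}=\jac f_{\tth}(\rvx_0)$ via Taylor's theorem, you track the $o(\|\dd\|_2)$ remainder that the paper silently drops, and you justify the final inequality through the SVD rather than by fiat. Your closing remark---that the middle equality is only exact in the limit and that Theorem~\ref{thm:jacobian} gives a cleaner non-infinitesimal version---is a valid observation the paper does not make; it is extra rigor rather than a different method.
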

The proposition above suggests that when the maximum singular value of the weight matrices is small, the function $f_{\tth}$ becomes less sensitive to input perturbations, which can significantly enhance adversarial robustness. Inspired by this, we can re-examine the self-attention mechanism as a product of linear mappings and apply the same spirit to enhance the robustness of Vision Transformers.

\paragraph{Re-examining Self-Attention as a Product of Linear Mapping Operations}

We propose to reconsider self-attention, the fundamental module in ViT as a multiplication of three \emph{linear mappings}, on which the aforementioned proposition on model robustness can be applied.
Given an input $\X\in\mathbb{R}^{N\times d}$, the self-attention module in ViT is typically represented as: 
{\footnotesize
\begin{equation}
\operatorname{Attn}\left(\mathbf{X}, \mathbf{W}^Q, \mathbf{W}^K, \mathbf{W}^V\right)=\operatorname{softmax}\left(\frac{\mathbf{X} \mathbf{W}^Q\left(\mathbf{X} \mathbf{W}^K\right)^{\top}}{\sqrt{D}}\right) \mathbf{X} \mathbf{W}^V,
\end{equation}
}where $\Wq,\Wk,\Wv\in\mathbb{R}^{d\times D}$ are projection weight matrices corresponding to query, key, and value.
$N, d$, and $D$ stand for the number of tokens, data dimension, and the hidden dimension of self-attention, respectively.
By virtue of the intrinsic nature of self-attention, we conceptualized it as a three-way linear transformation, where the matrices are multiplied together:
\begin{equation}
\resizebox{0.94\linewidth}{!}{$\operatorname{Attn}\left(\mathbf{X}, \mathbf{W}^Q, \mathbf{W}^K, \mathbf{W}^V\right)=\operatorname{softmax}\left(\frac{\mathbf{X} \mathbf{W}^Q\left(\mathbf{X} \mathbf{W}^K\right)^{\top}}{\sqrt{D}}\right) \mathbf{X} \mathbf{W}^V=\operatorname{softmax}\left(\frac{h_1(\mathbf{X})h_2(\mathbf{X})^\top}{\sqrt{D}}\right)h_3(\mathbf{X})$},
\end{equation}
where these linear mapping operations are formulated as:
{\footnotesize
\begin{equation}
    h_1(\mathbf{X})=\mathbf{X} \mathbf{W}^Q, ~ h_2(\mathbf{X})=\mathbf{X} \mathbf{W}^K, ~h_3(\mathbf{X})=\mathbf{X} \mathbf{W}^V.
\end{equation}
}

After reinterpreting the self-attention mechanism as a product of three linear mappings, we are inspired by Prop. \ref{prop1} to readily discern that we can independently control the maximum singular values of these three weight matrices to imbue the model with adversarial robustness. 

\begin{theorem}\label{theo:local_lip}
    The self-attention layer is local Lipschitz continuous in $B_2\left(\mathbf{X}_0, \delta_0\right)$ 
\begin{equation}\label{eq:local_lip_x}
\resizebox{.93\textwidth}{!}{$
    \textstyle {\operatorname{Lip}_{local}(\mathrm{Att},\mathbf{X}_0)} \leq  N(N+1)\left(\|X_0\|_F+\delta_0\right)^2\Big[\left\|\Wv\right\|_2\left\|\Wq\right\|_2\left\|\mathbf{W}^{K,\top}\right\|_2 + \left\|\Wv\right\|_2\Big],$}
\end{equation}
If we make a more stringent assumption that all inputs to the self-attention layer are bounded, i.e., $\mathcal{X}\in\mathbb{R}^{N\times d}$ represents a bounded open set, then we can determine the maximum of the input $\mathbf{X}$ as $B=\max _{\mathbf{X} \in \mathcal{X}}\|\mathbf{X}\|_F$. This allows us to establish a significantly stronger conclusion:
\begin{equation}\label{eq:local_lip_const}
\resizebox{.90\textwidth}{!}{$
    {\operatorname{Lip}_{local}(\mathrm{Att},\mathbf{X}_0)} \leq  N(N+1)\left(B+\delta_0\right)^2\Big[\left\|\Wv\right\|_2\left\|\Wq\right\|_2\left\|\mathbf{W}^{K,\top}\right\|_2 + \left\|\Wv\right\|_2\Big].$}
\end{equation}
\end{theorem}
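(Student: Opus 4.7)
The plan is to invoke Theorem~\ref{thm:jacobian} applied to $\mathrm{Att}$ and bound the operator norm of its Jacobian uniformly over $\mathbf{X}\in B_2(\mathbf{X}_0,\delta_0)$; equivalently, I would directly estimate $\|\mathrm{Att}(\mathbf{X}_1)-\mathrm{Att}(\mathbf{X}_2)\|_F$ for $\mathbf{X}_1,\mathbf{X}_2\in B_2(\mathbf{X}_0,\delta_0)$. The first step is to decompose the self-attention map as a product $\mathrm{Att}(\mathbf{X})=P(\mathbf{X})V(\mathbf{X})$, where $V(\mathbf{X})=\mathbf{X}\mathbf{W}^V$ is the value matrix and $P(\mathbf{X})=\mathrm{softmax}\bigl(A(\mathbf{X})\bigr)$ is the row-wise softmax of the logit matrix $A(\mathbf{X})=\mathbf{X}\mathbf{W}^Q(\mathbf{X}\mathbf{W}^K)^{\top}/\sqrt{D}$. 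A Leibniz-style splitting then gives
\begin{equation*}
\mathrm{Att}(\mathbf{X}_1)-\mathrm{Att}(\mathbf{X}_2)=P(\mathbf{X}_1)\bigl(V(\mathbf{X}_1)-V(\mathbf{X}_2)\bigr)+\bigl(P(\mathbf{X}_1)-P(\mathbf{X}_2)\bigr)V(\mathbf{X}_2),
\end{equation*}
which I would use as the backbone of the estimate, bounding each summand via submultiplicativity of compatible norms.

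Next, I would handle the two summands in turn. The value difference is linear: $V(\mathbf{X}_1)-V(\mathbf{X}_2)=(\mathbf{X}_1-\mathbf{X}_2)\mathbf{W}^V$, so its Frobenius norm is at most $\|\mathbf{W}^V\|_2\|\mathbf{X}_1-\mathbf{X}_2\|_F$, and the row-stochasticity of $P(\mathbf{X}_1)$ controls its operator norm by an $N$-dependent constant. For the second summand I would chain the Lipschitz constant of the row-wise softmax with the local Lipschitz constant of the quadratic logit map $A$; the telescoping decomposition $A(\mathbf{X}_1)-A(\mathbf{X}_2)=(\mathbf{X}_1-\mathbf{X}_2)\mathbf{W}^Q(\mathbf{X}_1\mathbf{W}^K)^{\top}/\sqrt{D}+\mathbf{X}_2\mathbf{W}^Q((\mathbf{X}_1-\mathbf{X}_2)\mathbf{W}^K)^{\top}/\sqrt{D}$ yields the factor $\|\mathbf{W}^Q\|_2\|\mathbf{W}^{K,\top}\|_2$ together with one power of $\sup_{\mathbf{X}\in B_2(\mathbf{X}_0,\delta_0)}\|\mathbf{X}\|_F\leq \|\mathbf{X}_0\|_F+\delta_0$ (via the triangle inequality). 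Multiplying by $\|V(\mathbf{X}_2)\|_F\leq(\|\mathbf{X}_0\|_F+\delta_0)\|\mathbf{W}^V\|_2$ supplies the second power. Finally, collecting the two contributions and absorbing the constants into the common prefactor $N(N+1)(\|\mathbf{X}_0\|_F+\delta_0)^2$ in front of $\bigl[\|\mathbf{W}^V\|_2\|\mathbf{W}^Q\|_2\|\mathbf{W}^{K,\top}\|_2+\|\mathbf{W}^V\|_2\bigr]$ delivers \eqref{eq:local_lip_x}.

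With the $\mathbf{X}_0$-dependent bound in place, the bounded-input version \eqref{eq:local_lip_const} is immediate: the right-hand side is monotone non-decreasing in $\|\mathbf{X}_0\|_F$, so replacing $\|\mathbf{X}_0\|_F$ by its uniform upper bound $B=\sup_{\mathbf{X}\in\mathcal{X}}\|\mathbf{X}\|_F$ preserves the inequality.

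The main obstacle I expect is the sharp control of the row-wise softmax Jacobian. Its per-row entries have the form $p_i(\delta_{ij}-p_j)$, and obtaining a clean $N$-dependent constant for the induced operator norm across all $N$ independent rows requires careful combinatorics — this is precisely where the $N(N+1)$ factor will arise. A secondary technical concern is keeping the quadratic-map telescope tight: both terms in the splitting of $A(\mathbf{X}_1)-A(\mathbf{X}_2)$ carry one copy of $\mathbf{X}$ whose Frobenius norm must be bounded uniformly inside $B_2(\mathbf{X}_0,\delta_0)$, and one must be careful not to introduce extra factors of $\|\mathbf{X}_1-\mathbf{X}_2\|_F$ that would degrade the local bound into a global (and hence vacuous) one.
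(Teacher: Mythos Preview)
Your Leibniz-splitting approach is a genuinely different route from the paper's. The paper proceeds by explicitly computing the full block Jacobian: it writes the attention output row-wise as $f_i(\mathbf{X})=\mathbf{P}_{i:}\mathbf{X}\mathbf{W}^V$, derives each block $\mathbf{J}_{ij}=\partial f_i/\partial\mathbf{x}_j$ via the chain rule (using the closed form $\partial\mathbf{P}_{i:}/\partial\mathbf{q}=\operatorname{diag}(\mathbf{P}_{i:})-\mathbf{P}_{i:}\mathbf{P}_{i:}^\top$ together with a case split on $i=j$ versus $i\neq j$ for $\partial(\mathbf{X}\mathbf{A}\mathbf{x}_i)/\partial\mathbf{x}_j$, where $\mathbf{A}=\mathbf{W}^Q\mathbf{W}^{K,\top}/\sqrt{D}$), and then bounds each block's spectral norm using $\|\mathbf{P}^{(i)}\|_2\le 1$ and $\|\mathbf{E}_{ij}\|_2\le 1$. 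The $N(N+1)$ factor is \emph{not} produced by any delicate softmax-Jacobian combinatorics as you anticipate; it is purely structural: the diagonal block carries two quadratic-in-$\mathbf{X}$ terms while each of the $N-1$ off-diagonal blocks carries one, giving $(N+1)$ per block row, and then the $N$ block rows are summed. Your direct increment estimate is more elementary and would in fact deliver tighter $N$-dependence (of order $\sqrt{N}$ from the operator norm of the row-stochastic $P(\mathbf{X}_1)$, and $O(1)$ from the row-wise softmax Lipschitz constant), so you should not expect to recover $N(N+1)$ by sharpening the softmax analysis --- your route simply proves a stronger inequality which implies the stated one. One small caution common to both routes: your first summand $P(\mathbf{X}_1)\bigl(V(\mathbf{X}_1)-V(\mathbf{X}_2)\bigr)$ carries no $(\|\mathbf{X}_0\|_F+\delta_0)$ factor at all, so ``absorbing'' it into the stated prefactor $N(N+1)(\|\mathbf{X}_0\|_F+\delta_0)^2$ tacitly needs $(\|\mathbf{X}_0\|_F+\delta_0)$ to be bounded away from zero; the paper's own proof makes the same silent move when it multiplies the $P_{ij}I$ contribution by $(B+\delta_0)^2$.
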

This framework provides us with theoretical support, allowing us to manage the local Lipschitz constant of the attention layer by controlling the maximum singular values of $\Wq, \Wk, \Wv$.
\paragraph{Comparison with Existing Bounds} The existing literature on introducing Lipschitz continuity into Transformer models comprises three notable articles\cite{kim2021lipschitz,dasoulas2021lipschitz,qi2023lipsformer}, where we abbreviate their proposed modified Transformers as L2Former\cite{kim2021lipschitz}, LNFormer\cite{dasoulas2021lipschitz}, and LipsFormer\cite{qi2023lipsformer}. We compare these four models from three perspectives: whether they introduce Lipschitz continuity in Transformers, whether they address the robustness problem, and whether the analysis they provide on the attention mechanism is simplified or not. A detailed summary is presented in Table \ref{tb_benchmarks}. We can see that our analysis is the most comprehensive. More detailed comparisons can be found in Appendix D. 

\begin{table*}[ht]
\caption{Comprehensive Comparison of the Baseline Models. }
\centering
\label{tb_benchmarks}
\resizebox{\linewidth}{!}{
\begin{tabular}{c|c|c|c|c}
\toprule
 {Model} & {Proposed Mechanism}  & {\thead{Lipschitz\\ Continuity}} & {Robustness} & {Not Simplified}\\
\midrule
\thead{Transformer\\\cite{dosovitskiy2020image}} & $\operatorname{softmax}\left(\frac{\mathbf{X} \mathbf{W}^Q\left(\mathbf{X} \mathbf{W}^K\right)^{\top}}{\sqrt{D}}\right) \mathbf{X} \mathbf{W}^V$ &  \texttimes & \texttimes & $-$ \\
\midrule
\thead{L2Former\\\cite{kim2021lipschitz}}  & $\exp \left(-\frac{\left\|\mathbf{x}_i^{\top} W^Q-\mathbf{x}_j^{\top} W^K\right\|_2^2}{\sqrt{D / H}}\right)$ & \checkmark  & \texttimes & \texttimes \\
\midrule
\thead{LNFormer\\\cite{dasoulas2021lipschitz}} & $\frac{Q^{\top} K}{\max \{u v, u w, v w\}}$ & \checkmark  & \texttimes&\texttimes \\
\midrule
 \thead{LipsFormer\\\cite{qi2023lipsformer}} & $\mathbf{q}_i=\frac{\left(\mathbf{x}_i^{\top} \bW^Q\right)^{\top}}{\sqrt{\left\|\mathbf{x}_i^{\top} \bW^Q\right\|^2+\epsilon}}$ & \checkmark  & \texttimes & \checkmark \\
\midrule
 \thead{SpecFormer\\(Ours)}& $\mathcal{L}_{c l s}+\lambda \cdot \sigma_{\max }^2\left(\mathbf{W}\right)$ & \checkmark  & \checkmark& \checkmark\\
\bottomrule
\end{tabular}
}
\end{table*}

\section{\method}

Inspired by the above analysis, we propose \textbf{\method}, a more robust ViT against adversarial attacks, as illustrated in \figurename~\ref{fig:model}. \method employs \textbf{Maximum Singular Value Penalization (MSVP)} with an approximation algorithm named power iteration to reduce the computational costs.

\subsection{Maximum Singular Value Penalization}
\label{sec:msvp}
MSVP aims to enhance the robustness of ViT by adding penalization to the self-attention layers.
Concretely speaking, MSVP restricts the Lipschitz constant of self-attention layers by penalizing the maximum singular values of the linear transformation matrices $\Wq, \Wk$, and $\Wv$.
Denote the classification loss as $\mathcal{L}_{cls}$ such as cross-entropy, the overall training objective with MSVP is:
\begin{equation}
    \mathcal{J} = \mathcal{L}_{cls}+\mathcal{L}_{msvp}  = \mathcal{L}_{cls}+\lambda \cdot \big[\sigma^2_{\max}(\Wq)+ \sigma^2_{\max}(\Wk)+ \sigma^2_{\max}(\Wv)\big],
\end{equation}
where $\lambda$ is the trade-off hyperparameter.
The original Transformer~\cite{vaswani2017attention} adopts multi-head self-attention to jointly attend to information from different subspaces at different positions.
In line with that spirit, MSVP can also be added in a multi-head manner.
Incorporating the summation over all the heads and layers, the overall training objective with multi-head MSVP is:
\begin{equation}
\resizebox{.9\textwidth}{!}{$
    \mathcal{J}=\mathcal{L}_{cls}+\mathcal{L}_{msvp}  =  \mathcal{L}_{cls}+\lambda \sum\limits_{l=1}^L\sum\limits_{h=1}^H\big[\sigma^2_{\max}(\mathbf{W}_l^{Q,h})+\sigma^2_{\max}(\mathbf{W}_l^{K,h})+ \sigma^2_{\max}(\mathbf{W}_l^{V,h})\big],$}
\end{equation}
where $\mathbf{W}_l^{*,h}$ denotes the $h^{th}$ head in the $l^{th}$ attention layer, and $*$ could be $Q,K$ or $V$. By constraining the maximum singular value of the weight matrices in the mapping, we can regulate the extent of output variations when subjected to attacks, thereby enhancing the model's robustness.
The computation of maximum singular values can be seamlessly integrated into the forward process.

In the next section, we show how to perform efficient computation of maximum singular values using a convergence-guaranteed power iteration algorithm.

\begin{figure}[t!]
       \centering
        \includegraphics[width=0.8\linewidth]{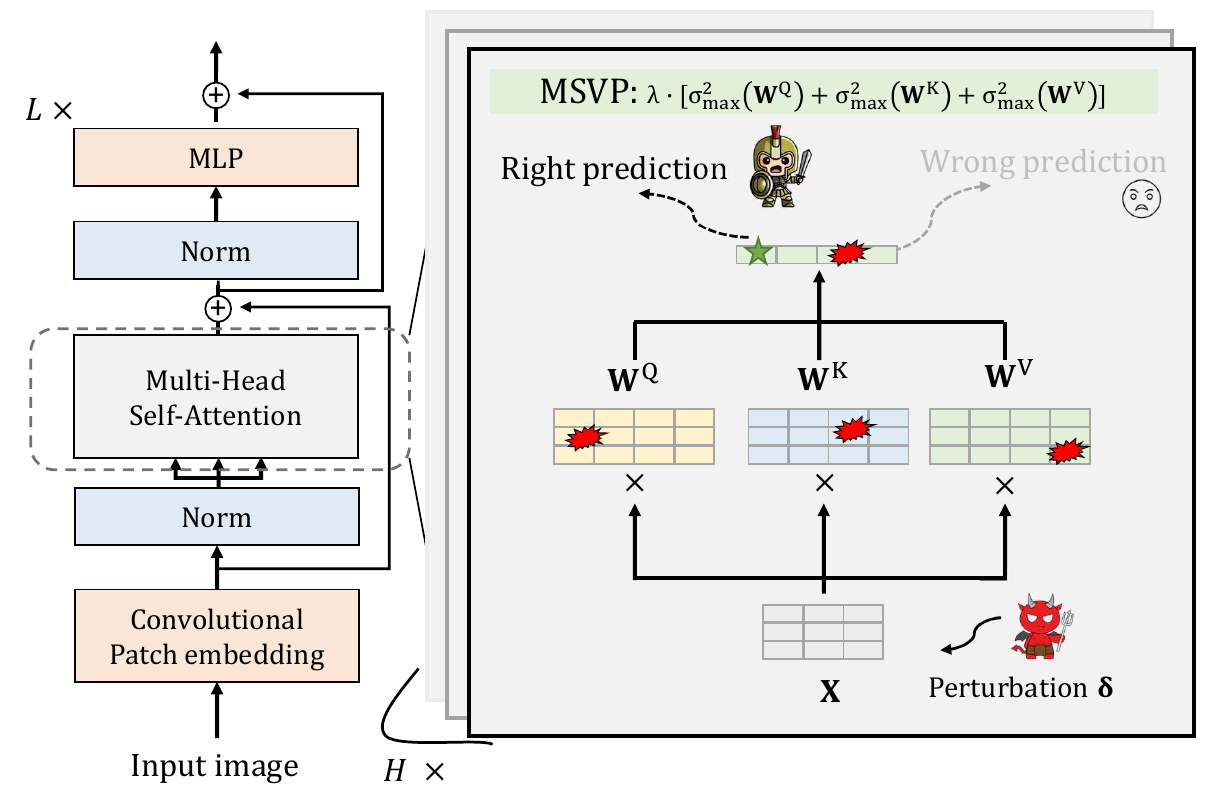}
        \caption{\method with MSVP.}
        \label{fig:model}
\end{figure}
       
    
\begin{algorithm}[H]
  \caption{Detailed algorithm for \method }\label{alg:spectral}
  \begin{algorithmic}[1]
    \State $\triangleright$ Initialization:
     \For{layer $\ell=1$ to $L$}
      \State $\bmu^\ell_q,\bmv^\ell_q,\bmu^\ell_k,\bmv^\ell_k,\bmu^\ell_v,\bmv^\ell_v \leftarrow$ initialize approximation vectors for $\Wq,\Wk,\Wv$.
    \EndFor
      \State $\triangleright$ Forward:
      \State Consider a minibatch $\set{(\x_{1},y_{1}),\ldots,(\x_{k},y_{k})}$ from training data.
      \For {layer $\ell=1$ to $L$}
      \State Pass the minibatch through the layers, at the same time
      \For {head $h=1$ to $H$ }
        \For{a sufficient number of times} \Comment{One iteration is adequate }
          \State  $\bmv^\ell \leftarrow (W^\ell)^\top \bmu^\ell$, $\bmu^\ell \leftarrow \mathbf{W}^\ell \bmv^\ell$, $\sigma^\ell \leftarrow (\bmu^\ell)^\top  \mathbf{W}^\ell \bmv^\ell$
          \State Add $\lambda(\sigma^\ell)^2$ to maximum singular value loss $\mathcal{L}_{msvp}$.
        \EndFor
      \EndFor
      \EndFor
      \State $\triangleright$ Backward:
      \State Update parameters $\theta$ by backward propagation: $\mathcal{L}_{cls}+ \mathcal{L}_{msvp}$.
    
  \end{algorithmic}
\end{algorithm}
\subsection{Power Iteration}
Singular value decomposition (SVD) is the most direct way to calculate the maximum singular value.
For any real matrix $\A \in \mathbb{R}^{m \times n}$, there exists a singular value decomposition of the form $\mathbf{A}=\mathbf{U} \mathbf{\Sigma} \mathbf{V}^\top$, where $\mathbf{U}$ is an $m\times m$ orthogonal matrix, $\mathbf{\Sigma}$ is an $m\times n$ diagonal matrix and $\mathbf{V}$ is an $n\times n$ orthogonal matrix.
However, directly using SVD in MSVP adds $\mathcal{O}(m^2n+n^3)$ time complexity, creating a significant computational burden due to ViT's high hidden dimensions and numerous attention layers.

In this section, we propose to adopt the power iteration algorithm as an alternative approach to efficiently calculate the maximum singular values for the $\Wq,\Wk,\Wv$ weight matrices.
The power iteration method~\cite{burden2015numerical} is a commonly used approach to approximate the maximum singular value with each iteration taking $\mathcal{O}(mn)$ time.
By selecting an initial approximation vector $\bmu$ and $\bmv$, and then performing left and right matrix multiplication with the target matrix, we can obtain a reliable and accurate underestimate of the maximum singular value in a constant number of iterations.
Moreover, as the algorithm is iterative in nature, we can incorporate the updates into the model's update process, permitting us to efficiently estimate the maximum singular values with minimal cost. The convergence guarantee and proof for this algorithm can be found in Appendix C. 
The complete training pipeline of \method is shown in Alg.~\ref{alg:spectral}.

\section{Experiments}
\subsection{Setup}

\textbf{Datasets.} We adopt four popular benchmark datasets: CIFAR10,CIFAR100~\cite{krizhevsky2009learning}, ImageNet~\cite{deng2009imagenet}, and Imagenette~\cite{Jere2019imagenette}. These datasets are commonly adopted in studies involving the robustness of vision models~\cite{pang2022robustness,mo2022adversarial}. CIFAR-10 and CIFAR-100 datasets each comprise $60,000$ images, categorized into $10$ classes and $100$ classes, respectively. ImageNet encompasses over $1.2$ million training images and $50,000$ test images, distributed across $1,000$ classes. Imagenette is a subset consisting of $10$ classes that are easy to classify, selected from the ImageNet. It is often employed as a suitable proxy~\cite{mo2022adversarial,wang2023exploring} for evaluating the performance of models on the more extensive ImageNet. 

\textbf{Baselines.} We compare \method with several notable baselines.
Specifically, we evaluate \method against LipsFormer~\cite{qi2023lipsformer}, which introduces Lipschitz continuity into ViT through a novel scaled cosine similarity attention (SCSA) mechanism and by replacing other unstable Transformer components with Lipschitz continuous equivalents.
Additionally, we incorporate two other baseline models that implement Lipschitz continuity through different mechanisms: L2 multi-head attention \cite{kim2021lipschitz} and pre-softmax Lipschitz normalization \cite{dasoulas2021lipschitz}.
However, it is worth noting that the L2 multi-head attention approach requires $\mathbf{W}^{Q,h}=\mathbf{W}^{K,h}$, which is overly restrictive, potentially compromising the model's representation power.
We denote the L2 attention and Lipschitz normalization methods as L2Former and LNFormer, respectively. 

\textbf{Implementation details.}
In line with prior work~\cite{mo2022adversarial}, our evaluation spans across four distinct ViT backbones: the vanilla ViT~\cite{dosovitskiy2020image}, DeiT~\cite{touvron2021training}, ConViT~\cite{d2021convit} and Swin~\cite{liu2021swin}.
This approach enables us to perform a thorough validation and assess the general effectiveness of the MSVP algorithm.
To ensure a fair comparison, we adopt the training strategy outlined in ~\cite{mo2022adversarial}.
Specifically, we use the SGD optimizer with a weight decay of $0.0001$ and a learning rate of $0.1$ for $40$ epochs.
By default, we apply both CutMix and Mixup data augmentation.
We employ FGSM~\cite{goodfellow2014explaining} and PGD-2~\cite{madry2017towards} attacks for standard training, with an attack radius of $2/255$.
For adversarial training, we use CW-20~\cite{carlini2017towards} and PGD-20~\cite{madry2017towards} attacks, with an attack radius of $8/255$. We also evaluate performance using AutoAttack.
In the case of the baseline models (Lipsformer, L2former, and LNFormer), we strictly adhere to the training protocols described in their respective papers.

\subsection{Main Results}

\textbf{\method effectively enhances the adversarial robustness of ViTs with minor modifications.}
From \tablename~\ref{tab:clean}, \ref{tab:at}, \ref{tab:results-clean-larger}, \ref{tab:results-at-larger} and \ref{tab:imagenet}, we can see that our proposed \method achieves the best results among all the competitors. Specifically, under standard training, our \method improves over state-of-the-art robust ViT approaches by $\mathbf{8.96}$\% and $\mathbf{3.49}$\% in terms of robust accuracy on FGSM and PGD attacks, respectively, on average. Additionally, we enhance accuracy under AutoAttack for larger models by $\mathbf{2.36}$\%. Furthermore, we improve standard accuracy by $\mathbf{2.20}$\%.

Using adversarial training, our \method outperforms the best counterparts by $\mathbf{1.69}$\% and $\mathbf{1.26}$\% on CW and PGD attacks, respectively. For AutoAttack, we achieve an improvement of $\mathbf{0.50}$\% across all datasets and ViT variants, demonstrating stable robustness improvements. Our method also improves the clean accuracy of the ViT model by $\mathbf{3.06}$\%, significantly enhancing both the model's robustness and its original performance.

We have more observations from the results.
1) Some baseline methods exhibit inferior performance when compared to vanilla ViTs, potentially attributable to the imposition of excessively stringent Lipschitz continuity constraints, which limit the model's expressive capacity. 2) AutoAttack is too powerful for small models under standard training, resulting in an accuracy of $0$ in most cases. Therefore, we did not include the results for AA in Table \ref{tab:clean}. 3) In contrast to other methods that ensure Lipschitz continuity in ViTs through self-attention mechanism modifications, our \method achieves this by adding a straightforward penalty term to the attention layers, without altering the original self-attention mechanism. \method's simplicity and versatility allow seamless integration into various ViT architectures, preserving their flexibility.

The results on ImageNet in \tablename~\ref{tab:imagenet} following the setup in \cite{mo2022adversarial} demonstrate that \method significantly outperforms its counterparts in both standard and robust accuracy using ViT-B as the backbone. Hyperparameter analysis is provided in Appendix E.1.

\begin{table}[tb]
\caption{Performance (\%) of \method with different ViT variants on benchmark datasets under \emph{standard} training (using ImageNet-1k pre-trained weights). The best results are in \textbf{bold}. }
\label{tab:clean}
\centering
\resizebox{1.0\linewidth}{!}{
\begin{tabular}{ccccccccccccccccccccccc}
\toprule
\multirow{2}*{Model} & \multirow{2}*{Method}&\multicolumn{3}{c}{CIFAR-10}&&\multicolumn{3}{c}{CIFAR-100}&&\multicolumn{3}{c}{Imagenette}\\
\cmidrule{3-5} \cmidrule{7-9}\cmidrule{11-14}&&Standard & FGSM&PGD-2  &&Standard & FGSM&PGD-2   &&Standard & FGSM&PGD-2  \\
\toprule
\multirow{5}{*}{ViT-S} 
&LipsFormer~\cite{qi2023lipsformer}&71.13&	31.48&	4.17&&40.05&	9.92&	1.36&&86.80&	36.60&	30.20
\\
&L2Former~\cite{kim2021lipschitz}&79.65	&39.98&	13.39&&53.20	&15.35	&5.92&&92.80&	58.00&	37.80
\\
&LNFormer~\cite{dasoulas2021lipschitz}&75.82	&33.72&	7.75&&		48.81&	13.04&	7.27&&		92.00&	49.00&	41.80
\\
&TransFormer~\cite{dosovitskiy2020image}& 87.09&	45.56	&22.35	&&	63.52&	19.82&	7.01	&&	94.20&	72.60	&48.00\\
&SpecFormer (Ours)&\textbf{88.52}&\textbf{50.58}&\textbf{29.53}&&\textbf{69.78}&\textbf{23.92}&\textbf{9.67}&&\textbf{97.20}&\textbf{84.20}&\textbf{61.60}
\\
\midrule
\multirow{5}{*}{DeiT-Ti} 
&LipsFormer~\cite{qi2023lipsformer}&72.54&	36.14&	3.46&&39.72&	8.66&	0.96&&79.40	&30.60	&8.00
\\
&L2Former~\cite{kim2021lipschitz}&78.09	&36.64&	5.05&&		49.02&	12.63&	2.56&&		82.40&	51.00&	6.00
\\
&LNFormer~\cite{dasoulas2021lipschitz}&77.16	&34.78&	3.81&&		52.50&	14.40&	\textbf{2.83}&&		80.20&	44.80&	9.20
\\
&TransFormer~\cite{dosovitskiy2020image}& 86.40&	\textbf{46.10}&	14.46	&&	62.79&	19.89&	2.35&&		90.00&	64.20	&13.00
&\\
&SpecFormer (Ours)&\textbf{87.42}&45.71&\textbf{18.10}&&\textbf{64.14}&\textbf{20.93}&1.61&&\textbf{92.20}&\textbf{70.20}&\textbf{26.20}
\\
\midrule
\multirow{5}{*}{ConViT-Ti} 
&LipsFormer~\cite{qi2023lipsformer}&79.71&	38.47	&7.09&&		48.08&	10.84&	1.57&&		90.60	&44.40&	24.60
\\
&L2Former~\cite{kim2021lipschitz}&81.33	&40.17	&12.76&&		49.86&	13.86&	2.03&&		93.40&	62.20&	30.80
\\
&LNFormer~\cite{dasoulas2021lipschitz}&75.48	&28.67	&3.56	&&	51.13&	11.62&	2.41&&		84.20&	36.20&	15.00
\\
&TransFormer~\cite{dosovitskiy2020image}&\textbf{87.78}&	\textbf{48.89}&	20.26&&		64.68&	\textbf{22.94}&	\textbf{4.96}&&		\textbf{93.20}	&68.80	&\textbf{40.80}
\\
&\method (Ours)&
87.49&47.64&\textbf{20.53}&&\textbf{65.57}&21.78&4.10&&92.60&\textbf{69.00}&28.60
\\
\midrule
\multirow{5}{*}{Swin-Ti} 
& LipsFormer ~\cite{qi2023lipsformer}& 84.77 & 0.05 & 35.94 & &60.60 & 0.68 & 14.06 && 89.20 & 12.40 & 30.60 \\
& L2Former~\cite{kim2021lipschitz}& 97.86 & 44.33 & \textbf{54.69} && 88.24 & 12.76 & \textbf{29.69} && 99.40 & 66.00 & 95.20 \\
& LNFormer~\cite{dasoulas2021lipschitz}& 96.90 & 30.96 & 47.66 && 84.54 & 6.42 & 21.88 && 99.20 & 37.40 & 68.80 \\
& TransFormer~\cite{dosovitskiy2020image} & 97.51 & 42.91 & 53.91 && 87.89 & 14.16 & 25.00 && \textbf{99.60} & 61.40 & 95.31 \\
& SpecFormer (Ours) & \textbf{98.25} & \textbf{58.15} & 53.64 && \textbf{89.07} & \textbf{16.24} & 28.74 && 99.40 & \textbf{73.00} & \textbf{96.80} \\
\bottomrule
\end{tabular}}
\end{table}
\begin{table}[ht]
\caption{Results under \textbf{standard training} for larger models.}
\label{tab:results-clean-larger}
\centering
\resizebox{\linewidth}{!}{
\begin{tabular}{llcccccccccccc}
\toprule
 & & \multicolumn{4}{c}{CIFAR10} & \multicolumn{4}{c}{CIFAR100} & \multicolumn{4}{c}{Imagenette} \\ \cmidrule(lr){3-6} \cmidrule(lr){7-10} \cmidrule(lr){11-14}
 & & Standard & PGD2 & FGSM & AutoAttack & Standard & PGD2 & FGSM & AutoAttack & Standard & PGD2 & FGSM & AutoAttack \\
\midrule
 & LipsFormer~\cite{qi2023lipsformer} & 83.32 & 13.83 & 40.52 & 8.26 & 59.05 & 7.97 & 18.11 & 4.00 & 90.80 & 29.80 & 32.00 & 1.20 \\
 & L2Former~\cite{kim2021lipschitz} & 90.67 & 18.80 & 47.01 & 8.33 & 75.03 & 11.71 & 24.87 & 5.83& 97.60 & 40.40 & 77.40 & 5.20 \\
ViT-B & LNFormer~\cite{dasoulas2021lipschitz} & 86.68 & 14.20 & 40.67 & 5.99 & 65.90 & 8.35 & 17.77 & 4.09 & 92.00 & 31.00 & 38.80 & 3.20 \\
 & TransFormer~\cite{dosovitskiy2020image} & 92.97 & 31.88 & 55.02 & 9.12& 78.24 & 14.14 & 30.05 & 5.91& 98.40 & 52.60 & 80.47 & 11.60 \\
 & SpecFormer (Ours) & \textbf{96.51} & \textbf{52.91} & \textbf{60.78} & \textbf{10.39} & \textbf{85.73} & \textbf{27.48} & \textbf{34.29} & \textbf{6.58} & \textbf{99.80} & \textbf{58.40} & \textbf{83.00} & \textbf{20.00} \\
\midrule
 & LipsFormer~\cite{qi2023lipsformer} & 81.85 & 13.07 & 36.77 & 8.54 & 56.00 & 6.85 & 16.50 & 3.18 & 87.00 & 23.80 & 43.60 & 3.60 \\
 & L2Former~\cite{kim2021lipschitz} & 89.81 & 18.95 & 42.44 & 9.47 & 74.12 & 11.41 & 24.21 & 5.77 & 93.40 & 31.00 & 63.20 & 9.80 \\
DeiT-S & LNFormer~\cite{dasoulas2021lipschitz} & 86.48 & 13.78 & 39.32 & 6.94 & 67.01 & 8.40 & 20.05 & 4.17 & 91.40 & 31.80 & 54.20 & 8.40 \\
 & TransFormer~\cite{dosovitskiy2020image} & 92.34 & 30.54 & 54.43 & 16.01 & 77.17 & 13.57 & 28.00 & 7.05 & 95.60 & 44.40 & 77.40 & 18.40 \\
 & SpecFormer (Ours) & \textbf{94.99} & \textbf{52.76} & \textbf{60.77} & \textbf{17.20} & \textbf{81.40} & \textbf{19.21} & \textbf{31.81} & \textbf{8.46} & \textbf{97.80} & \textbf{65.80} & \textbf{83.80} & \textbf{19.20} \\
\midrule
 & LipsFormer~\cite{qi2023lipsformer} & 89.53 & 28.08 & 47.43 & 18.62 & 69.82 & 14.32 & 23.41 & 8.32 & 89.20 & 29.00 & 45.00 & 8.40 \\
 & L2Former~\cite{kim2021lipschitz} & 92.63 & 34.88 & 52.85 & 18.04 & 74.03 & 13.58 & 25.44 & 7.50 & 94.20 & 42.20 & 75.20 & 10.00 \\
ConViT-S & LNFormer~\cite{dasoulas2021lipschitz} & 87.84 & 17.96 & 42.68 & 9.54 & 67.79 & 8.84 & 19.29 & 4.43 & 89.00 & 16.20 & 36.80 & 6.60 \\
 & TransFormer~\cite{dosovitskiy2020image} & 93.54 & 38.63 & 54.38 & \textbf{22.64} & 78.52 & 19.79 & 30.38 & \textbf{11.80} & 96.20 & 49.08 & 75.40 & 10.40 \\
 & SpecFormer (Ours) & \textbf{95.58} & \textbf{57.74} & \textbf{59.09} & 21.01 & \textbf{82.10} & \textbf{24.13} & \textbf{33.13} & 11.35& \textbf{99.20} & \textbf{76.20} & \textbf{86.60} & \textbf{21.40} \\
\bottomrule
\end{tabular}}
\end{table}
\begin{table}[ht]
    \caption{Performance (\%) of \method with different ViT variants on benchmark datasets under \emph{adversarial} training (using ImageNet-1k pre-trained weights). The best results are in \textbf{bold}. }
    \label{tab:at}
    \centering
    \resizebox{\linewidth}{!}{
    \begin{tabular}{llcccccccccccc}
        \toprule
        Model & Method & \multicolumn{4}{c}{CIFAR-10} & \multicolumn{4}{c}{CIFAR-100} & \multicolumn{4}{c}{Imagenette} \\
        \cmidrule(lr){3-6} \cmidrule(lr){7-10} \cmidrule(lr){11-14}
         & & Standard & CW-20 & PGD-20 & AutoAttack & Standard & CW-20 & PGD-20 & AutoAttack & Standard & CW-20 & PGD-20 & AutoAttack \\
        \midrule
        \multirow{5}{*}{ViT-S} & LipsFormer~\cite{qi2023lipsformer} & 41.54 & 24.20 & 27.50 & 23.74 & 24.08 & 10.47 & 13.04 & 9.71 & 50.00 & 31.00 & 34.80 & 37.00 \\
        & L2Former~\cite{kim2021lipschitz} & 63.22 & 33.55 & 35.78 & 36.35 & 37.20 & 13.69 & 15.65 & 18.78 & 84.80 & 54.60 & 54.60 & 62.20 \\
        & LNFormer~\cite{dasoulas2021lipschitz} & 56.04 & 29.92 & 32.74 & 30.56 & 26.37 & 9.84 & 11.63 & 14.55 & 81.00 & 46.80 & 48.80 & 45.00 \\
        & TransFormer~\cite{dosovitskiy2020image} & 71.76 & \textbf{34.34} & \textbf{35.49} & 46.32 & 36.45 & 11.97 & 12.89 & \textbf{24.29} & 89.60 & 62.80 & 62.40 & 60.60 \\
        & SpecFormer (Ours) & \textbf{72.73} & 31.84 & 31.90 & \textbf{47.07} & \textbf{41.46} & \textbf{12.80} & \textbf{13.54} & 23.25 & \textbf{91.60} & \textbf{67.00} & \textbf{67.00} & \textbf{64.60} \\
        \midrule
        \multirow{5}{*}{DeiT-Ti} & LipsFormer~\cite{qi2023lipsformer} & 39.72 & 24.05 & 26.77 & 23.07 & 23.09 & 9.82 & 12.03 & 9.54 & 39.00 & 27.60 & 28.40 & 23.80 \\
        & L2Former~\cite{kim2021lipschitz} & 60.85 & 34.29 & 36.63 & 34.03 & 36.67 & 14.23 & 16.48 & 17.24 & 77.40 & 41.40 & 43.20 & 48.20 \\
        & LNFormer~\cite{dasoulas2021lipschitz} & 54.32 & 29.85 & 32.96 & 29.44 & 28.65 & 11.36 & 13.94 & 13.69 & 72.20 & 39.00 & 42.20 & 38.00 \\
        & TransFormer~\cite{dosovitskiy2020image} & 71.71 & 37.15 & 38.74 & 43.60 & 40.89 & 15.25 & 17.40 & 20.89 & 79.00 & 40.60 & 41.40 & 56.40 \\
        & SpecFormer (Ours) & \textbf{80.03} & \textbf{48.52} & \textbf{51.10} & \textbf{45.36} & \textbf{44.48} & \textbf{16.36} & \textbf{18.21} & \textbf{23.49} & \textbf{82.20} & \textbf{46.20} & \textbf{46.00} & \textbf{58.20} \\
        \midrule
        \multirow{5}{*}{ConViT-Ti} & LipsFormer~\cite{qi2023lipsformer} & 56.83 & 32.27 & 35.04 & 30.75 & 31.50 & 14.56 & 17.17 & 14.15 & 60.80 & 34.00 & 38.20 & 33.40 \\
        & L2Former~\cite{kim2021lipschitz} & 39.36 & 23.84 & 26.42 & 22.21 & 16.53 & 8.06 & 9.65 & 9.03 & 10.00 & 10.00 & 10.00 & 10.00 \\
        & LNFormer~\cite{dasoulas2021lipschitz} & 49.03 & 28.68 & 31.68 & 27.20 & 29.12 & 13.00 & 15.65 & 12.46 & 72.20 & 39.00 & 42.20 & 22.40 \\
        & TransFormer~\cite{dosovitskiy2020image} & 53.09 & 30.87 & 33.63 & \textbf{37.78} & 31.54 & \textbf{14.65} & \textbf{17.24} & 14.47 & 68.00 & 41.40 & 43.60 & \textbf{39.20} \\
        & SpecFormer (Ours) & \textbf{67.05} & \textbf{38.72} & \textbf{41.58} & 35.30 & \textbf{37.92} & 14.50 & 16.78 & \textbf{18.23} & \textbf{86.60} & \textbf{47.20} & \textbf{46.20} & 36.20 \\
        \midrule
        \multirow{5}{*}{Swin-Ti}& LipsFormer~\cite{qi2023lipsformer} &54.52	
         & 32.09	& 35.07	& 18.32& 32.96 & 15.79 & 18.77 & 14.78& 55.80 & 34.20 & 38.00 & 33.60 \\
         & L2Former~\cite{kim2021lipschitz} & 79.20 & 44.72 & 46.20 & 43.00& 53.63 & 22.47 & 24.39 & 20.89& 94.00 & 69.80 & 69.40 & 68.60 \\
         & LNFormer~\cite{dasoulas2021lipschitz} & 77.39 & 43.33 & 45.13 & 41.68& 52.90 & 22.48 & 24.70 & 20.76& 93.00 & 67.60 & 68.00 & 66.80 \\
         & TransFormer~\cite{dosovitskiy2020image} & \textbf{81.19} & \textbf{45.44} & \textbf{46.72} & \textbf{43.81}& \textbf{56.51} & 23.25 & 24.56 & \textbf{21.42}& 95.20 & 70.80 & 71.00 & 69.40 \\
         & SpecFormer(Ours) & 79.48 & 43.98 & 45.78 & 42.20& 55.53 & \textbf{23.32} & \textbf{25.10} & 21.17& \textbf{95.40} & \textbf{73.20} & \textbf{72.80} & \textbf{71.80} \\
        \bottomrule
    \end{tabular}}
\end{table}
\begin{table}[ht]
\caption{Results under \textbf{adversarial training} for larger models.}
\label{tab:results-at-larger}
\centering
\resizebox{\linewidth}{!}{
\begin{tabular}{llcccccccccccc}
\toprule
 Model& Method& \multicolumn{4}{c}{CIFAR10} & \multicolumn{4}{c}{CIFAR100} & \multicolumn{4}{c}{Imagenette} \\ \cmidrule(lr){3-6} \cmidrule(lr){7-10} \cmidrule(lr){11-14}
 & & Standard & CW-20 & PGD-20 & AutoAttack & Standard & CW-20 & PGD-20 & AutoAttack & Standard & CW-20 & PGD-20 & AutoAttack \\
\midrule
 & LipsFormer~\cite{qi2023lipsformer} & 38.54 & 22.47 & 25.29 & 21.56 & 21.68 & 9.39 & 11.42 & 8.72 & 57.00 & 25.00 & 31.00 & 23.80 \\
 & L2Former~\cite{kim2021lipschitz} & 68.91 & 41.37 & 43.89 & 39.78 & 46.84 & 21.52 & 24.58 & 19.94 & 70.20 & 34.60 & 39.20 & 33.80 \\
ViT-B & LNFormer~\cite{dasoulas2021lipschitz} & 52.99 & 30.19 & 33.36 & 29.13 & 34.75 & 16.03 & 18.86 & 14.67 & 64.00 & 31.20 & 35.60 & 29.20 \\
 & TransFormer~\cite{dosovitskiy2020image} & 82.70 & 50.17 & \textbf{52.65} & 48.27 & \textbf{61.18} & \textbf{28.33} & \textbf{30.16} & \textbf{26.14} & \textbf{91.40} & \textbf{65.80} & \textbf{67.40} & \textbf{64.40} \\
 & SpecFormer (Ours) & \textbf{87.22} & \textbf{51.32} & 52.55 & \textbf{48.80} & 54.91 & 24.66 & 27.73 & 22.80 & 89.80 & 63.20 & 64.80 & 62.00 \\
\midrule
 & LipsFormer~\cite{qi2023lipsformer} & 41.90 & 25.00 & 27.79 & 24.03 & 25.17 & 11.30 & 13.54 & 10.40 & 47.60 & 31.00 & 34.40 & 30.80 \\
 & L2Former~\cite{kim2021lipschitz} & 67.86 & 39.80 & 42.73 & 38.13 & 45.38 & 21.12 & 23.87 & 19.66 & 84.60 & 55.00 & 55.60 & 53.60 \\
DeiT-S & LNFormer~\cite{dasoulas2021lipschitz} & 54.89 & 31.44 & 35.01 & 30.26 & 34.54 & 15.74 & 18.53 & 14.40 & 79.00 & 43.80 & 46.20 & 42.20 \\
 & TransFormer~\cite{dosovitskiy2020image} & 81.37 & 49.22 & 51.83 & 47.34 & 58.73 & 27.15 & 29.47 & 25.13 & 91.40 & 66.60 & 66.20 & 65.60 \\
 & SpecFormer (Ours) & \textbf{83.61} & \textbf{50.03} & \textbf{52.01} & \textbf{47.97} & \textbf{61.76} & \textbf{27.84} & \textbf{29.70} & \textbf{25.48} & \textbf{93.40} & \textbf{67.40} & \textbf{67.00} & \textbf{65.80} \\
\midrule
 & LipsFormer~\cite{qi2023lipsformer} & 38.99 & 23.59 & 26.40 & 22.72 & 15.92 & 7.81 & 9.54 & 7.39 & 59.40 & 33.80 & 38.40 & 32.60 \\
 & L2Former~\cite{kim2021lipschitz} & 32.50 & 23.15 & 24.42 & 22.58 & 23.61 & 10.82 & 13.29 & 10.22 & 10.00 & 10.00 & 10.00 & 10.00 \\
ConViT-S & LNFormer~\cite{dasoulas2021lipschitz} & 51.03 & 30.18 & 33.54 & 29.27 & 30.60 & 14.52 & 17.31 & 13.45 & 49.60 & 25.20 & 28.00 & 23.80 \\
 & TransFormer~\cite{dosovitskiy2020image} & 63.84 & 36.63 & 39.75 & 35.21 & 29.02 & 13.21 & 15.73 & 12.34 & 89.20 & 62.80 & 63.40 & 60.20 \\
 & SpecFormer (Ours) & \textbf{67.71} & \textbf{38.50} & \textbf{41.49} & \textbf{36.85} & \textbf{32.95} & \textbf{15.26} & \textbf{17.94} & \textbf{14.16} & \textbf{92.40} & \textbf{67.20} & \textbf{66.80} & \textbf{65.40} \\
\bottomrule
\end{tabular}}
\end{table}


\begin{table}[t!]
\caption{Performance (\%) of \method with different ViT variants on ImageNet datasets under standard training (using ImageNet-22k pre-trained weights). The best results are in \textbf{bold}. }
\label{tab:imagenet}
\centering
\resizebox{0.8\textwidth}{!}{
\begin{tabular}{cccccccccccccccccccc}
\toprule
\multirow{2}*{Method}&\multicolumn{3}{c}{Standard Training}&&\multicolumn{4}{c}{Adversarial Training}\\
\cmidrule{2-4} \cmidrule{6-9}&Standard & FGSM&PGD-2  &&Standard & CW-20&PGD-20&PGD-100     \\
\toprule
LipsFormer~\cite{qi2023lipsformer}&65.76&	20.87&	3.77&&45.04&	18.91&	21.13&20.83   
\\
L2Former~\cite{kim2021lipschitz}&77.40	&37.12&	6.89&&51.24	&24.85	&27.04&26.95
\\
LNFormer~\cite{dasoulas2021lipschitz}	&50.84&	25.78&	0.54&&	30.93&	11.53&14.08&14.04
\\
TransFormer~\cite{dosovitskiy2020image}& 79.11&	41.45	&10.79	&&	60.81&	30.92&	32.58&32.35	\\
SpecFormer (Ours)&\textbf{80.04}&\textbf{43.51}&\textbf{11.59}&&\textbf{62.30}&\textbf{31.87}&\textbf{32.82}&\textbf{32.56}
\\
\bottomrule
\end{tabular}
}
\vspace{-.1in}
\end{table}
\begin{figure}[t!]
\centering
\begin{subfigure}{0.33\textwidth}
    \includegraphics[width=\textwidth]{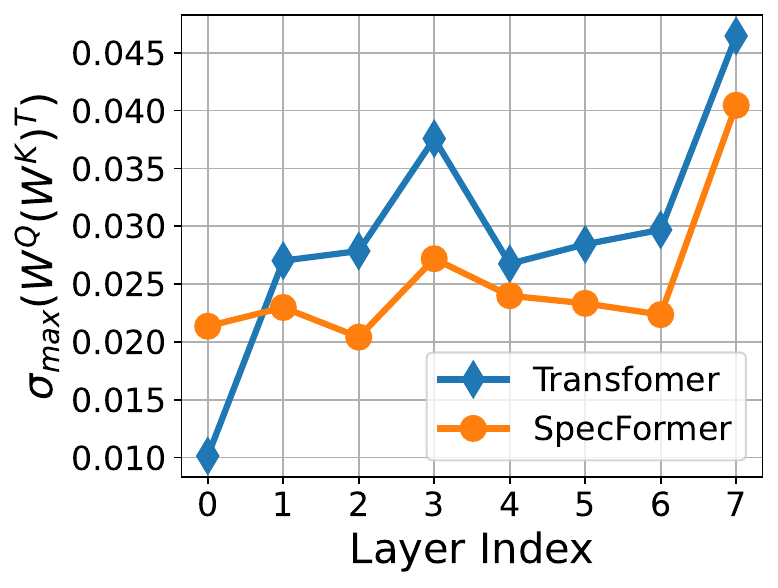}
    \caption{Maximum singular values}
    \label{fig:singular_values}
\end{subfigure}
\begin{subfigure}{0.3\textwidth}
    \includegraphics[width=\textwidth]{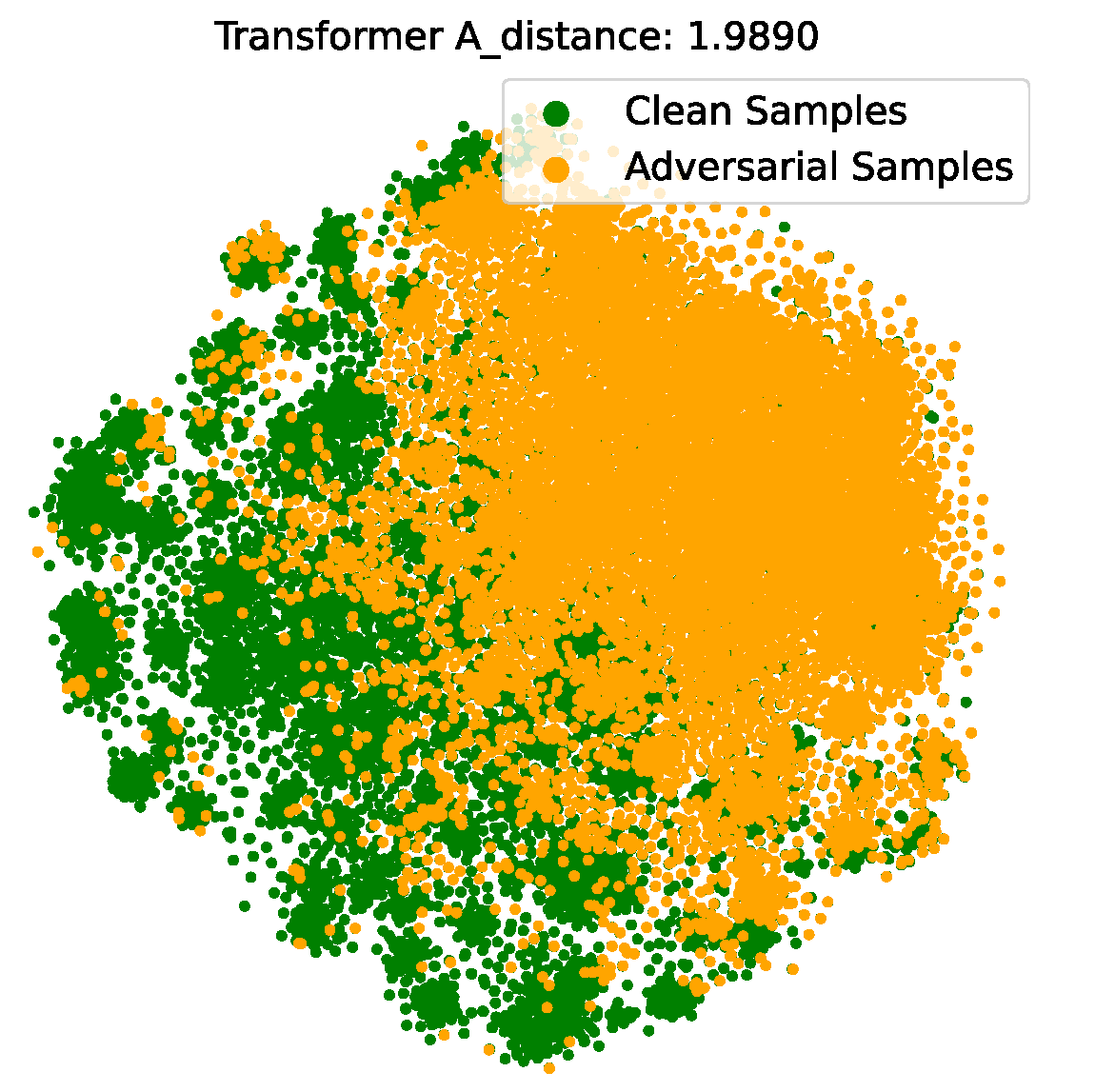}
    \caption{Vanilla ViTs}
    \label{fig:tSNE_ViT} 
\end{subfigure}
\begin{subfigure}{0.3\textwidth}
    \includegraphics[width=\textwidth]{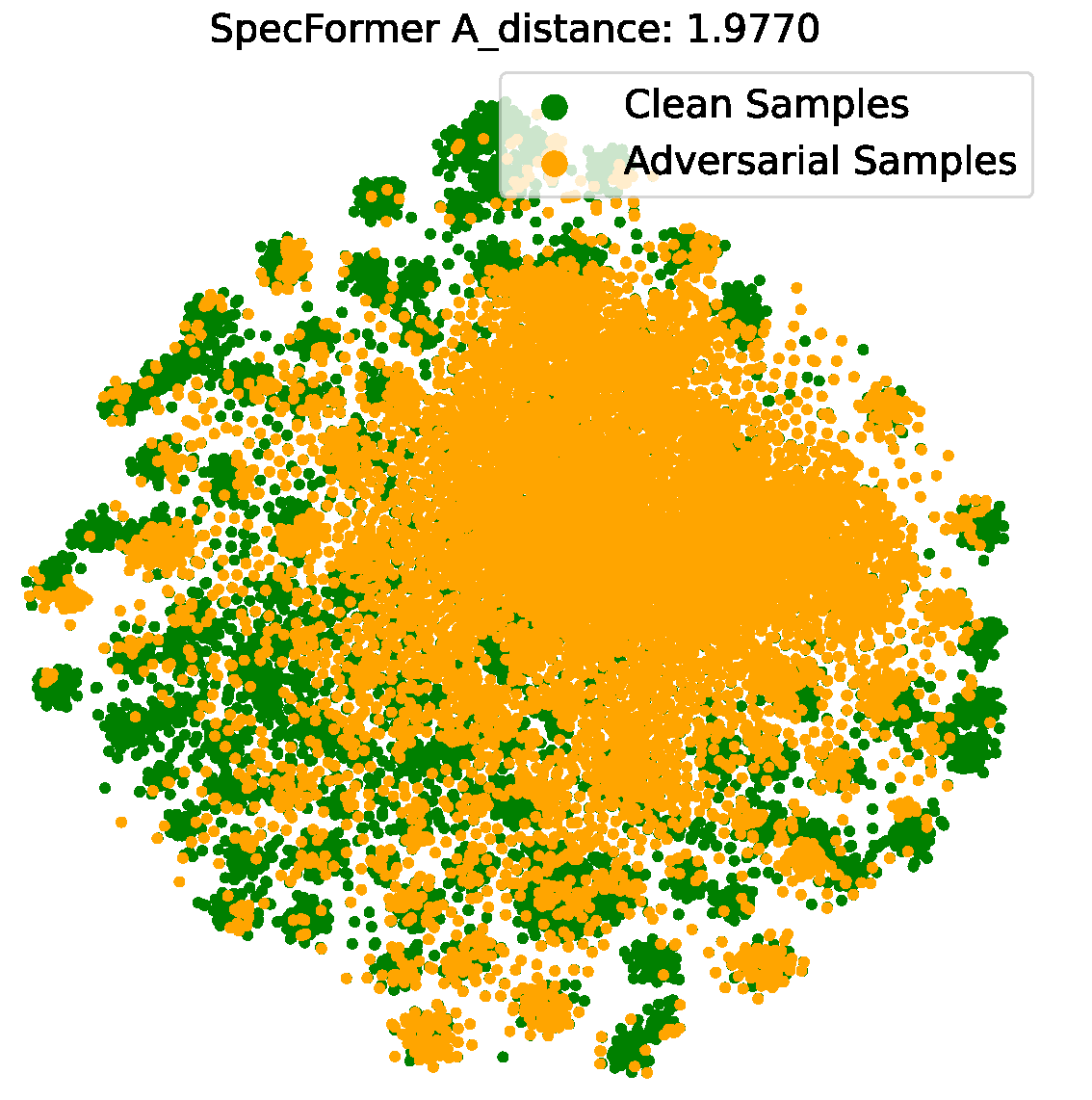}
    \caption{SpecFormer}
    \label{fig:tSNE_ViT_sn}
\end{subfigure}
\caption{The analysis of MSVP. (a) Maximum singular value comparison between MSVP and the vanilla Transformer. (b)\&(c) tSNE~\cite{van2008visualizing} feature visualization.}
\label{fig:main}
\end{figure}
\subsection{Analyzing the efficacy of MSVP}

\textbf{Analyzing the maximum singular values.}
We analyze the effectiveness of the proposed MSVP algorithm, which is the core of \method.
\figurename~\ref{fig:singular_values} shows the maximum singular values of both ViT and our \method.
It indicates that with minimal additional cost, our \method effectively limits the maximum singular values across all layers, which are smaller than those of vanilla ViTs.
Hence, our proposed MSVP algorithm can control the maximum singular value of the attention layers, ensuring stability even under extreme perturbations.

\textbf{Feature visualization.}
We further illustrate feature visualizations of both vanilla ViT and our SpecFormer in Figures \ref{fig:tSNE_ViT} and \ref{fig:tSNE_ViT_sn}, employing the t-SNE technique in \cite{van2008visualizing}. Adversarial attacks cause adversarial samples (orange dots) to be misclassified (evidenced by the deviation of green and orange dots within classification clusters, Figure \ref{fig:tSNE_ViT}). Better alignment between clean points (green) and adversarial points (orange) within each cluster indicates a more robust model. It is evident that our proposed SpecFormer's features align better than those of vanilla ViTs. This is further confirmed by the $\mathcal{A}$-distance \cite{ben2006analysis}, shown at the top of the images. A smaller distance signifies greater similarity between clean and attacked features, demonstrating enhanced robustness under perturbations. This contrast highlights the significant improvement in ViTs' adversarial robustness achieved by our method.

\textbf{Computation cost.}
Finally, we analyze the computation cost of our proposed method. Table \ref{tb:clock-time} displays the relative running time difference for each model to complete $10$ training steps under both standard and adversarial conditions (with vanilla ViT set to $1$). The table reveals that our proposed method exhibits the lowest additional computational costs compared to other baseline models, underscoring the computational efficiency of our approach.

%

%
\begin{table}[!ht]
    \centering
    \caption{
    Relative running time for $10$-step training (vanilla ViT=$1$). 
    The lowest additional computation costs are in \textbf{bold}. 
    }
    \label{tb:clock-time}
    \resizebox{0.8\textwidth}{!}{
    \begin{tabular}{cccccc}
    \toprule
        Relative & TransFormer~ & LipsFormer & L2Former & LNFormer & \method \\ \midrule
        Std. training & $1$ & $5.5$ & $2.5$ & $4.5$ & $\mathbf{1.5}$ \\ 
        
        Adv. training & $1$ & $12.3$ & $4.3$ & $11.7$ & $\mathbf{2.5}$ \\ \bottomrule
    \end{tabular}}
\end{table}

\section{Conclusion}
This paper presents a theoretical analysis of the self-attention mechanism in ViTs through the lens of adversarial robustness and Lipschitz continuity theory. We develop the local Lipschitz constant bound based on the investigation and further introduce the Maximum Singular Value Penalization (MSVP) to enhance the robustness of ViTs. Our theoretical bounds demonstrate that the Lipschitz continuity of ViTs in the vicinity of the input can be bounded by the maximum singular values of the attention weight matrices. Empirical validation across four ViT variants on four datasets, under both standard and adversarial training and various attacks (FGSM, CW, PGD, AutoAttack), demonstrates the superiority of our proposed model. The comprehensive evaluation underscores the practical applicability and robustness of our approach, making it a valuable contribution to the fields of adversarial robustness and Vision Transformer safety.

\section*{Acknowledgements}
Qi WU acknowledges the support from The CityU-JD Digits Joint Laboratory in Financial Technology and  Engineering and The Hong Kong Research Grants Council [General Research Fund 11219420/9043008 ]. The work described in this paper was partially supported by the InnoHK initiative, the Government of the HKSAR, and the Laboratory for AI-Powered Financial Technologies.

%
%
\bibliographystyle{splncs04}
\bibliography{main}

\begin{thebibliography}{10}
\providecommand{\url}[1]{\texttt{#1}}
\providecommand{\urlprefix}{URL }
\providecommand{\doi}[1]{https://doi.org/#1}

\bibitem{ali2021xcit}
Ali, A., Touvron, H., Caron, M., Bojanowski, P., Douze, M., Joulin, A., Laptev, I., Neverova, N., Synnaeve, G., Verbeek, J., et~al.: Xcit: Cross-covariance image transformers. Advances in neural information processing systems  \textbf{34},  20014--20027 (2021)

\bibitem{arnab2021vivit}
Arnab, A., Dehghani, M., Heigold, G., Sun, C., Lu{\v{c}}i{\'c}, M., Schmid, C.: Vivit: A video vision transformer. In: Proceedings of the IEEE/CVF international conference on computer vision. pp. 6836--6846 (2021)

\bibitem{bai2021recent}
Bai, T., Luo, J., Zhao, J., Wen, B., Wang, Q.: Recent advances in adversarial training for adversarial robustness. In: IJCAI survey track (2021)

\bibitem{bai2021transformers}
Bai, Y., Mei, J., Yuille, A.L., Xie, C.: Are transformers more robust than cnns? Advances in Neural Information Processing Systems  \textbf{34},  26831--26843 (2021)

\bibitem{ben2006analysis}
Ben-David, S., Blitzer, J., Crammer, K., Pereira, F.: Analysis of representations for domain adaptation. Advances in neural information processing systems  \textbf{19} (2006)

\bibitem{bhojanapalli2021understanding}
Bhojanapalli, S., Chakrabarti, A., Glasner, D., Li, D., Unterthiner, T., Veit, A.: Understanding robustness of transformers for image classification. In: Proceedings of the IEEE/CVF international conference on computer vision. pp. 10231--10241 (2021)

\bibitem{burden2015numerical}
Burden, R.L., Faires, J.D., Burden, A.M.: Numerical analysis. Cengage learning (2015)

\bibitem{carion2020end}
Carion, N., Massa, F., Synnaeve, G., Usunier, N., Kirillov, A., Zagoruyko, S.: End-to-end object detection with transformers. In: Computer Vision--ECCV 2020: 16th European Conference, Glasgow, UK, August 23--28, 2020, Proceedings, Part I 16. pp. 213--229. Springer (2020)

\bibitem{carlini2017towards}
Carlini, N., Wagner, D.: Towards evaluating the robustness of neural networks. In: 2017 ieee symposium on security and privacy (sp). pp. 39--57. Ieee (2017)

\bibitem{chen2018ead}
Chen, P.Y., Sharma, Y., Zhang, H., Yi, J., Hsieh, C.J.: Ead: elastic-net attacks to deep neural networks via adversarial examples. In: Proceedings of the AAAI conference on artificial intelligence. vol.~32 (2018)

\bibitem{cisse2017parseval}
Cisse, M., Bojanowski, P., Grave, E., Dauphin, Y., Usunier, N.: Parseval networks: Improving robustness to adversarial examples. In: International conference on machine learning. pp. 854--863. PMLR (2017)

\bibitem{croce2020reliable}
Croce, F., Hein, M.: Reliable evaluation of adversarial robustness with an ensemble of diverse parameter-free attacks. In: International conference on machine learning. pp. 2206--2216. PMLR (2020)

\bibitem{dasoulas2021lipschitz}
Dasoulas, G., Scaman, K., Virmaux, A.: Lipschitz normalization for self-attention layers with application to graph neural networks. In: International Conference on Machine Learning. pp. 2456--2466. PMLR (2021)

\bibitem{debenedetti2022light}
Debenedetti, E., Sehwag, V., Mittal, P.: A light recipe to train robust vision transformers. arXiv preprint arXiv:2209.07399  (2022)

\bibitem{dehghani2023scaling}
Dehghani, M., Djolonga, J., Mustafa, B., Padlewski, P., Heek, J., Gilmer, J., Steiner, A., Caron, M., Geirhos, R., Alabdulmohsin, I., et~al.: Scaling vision transformers to 22 billion parameters. arXiv preprint arXiv:2302.05442  (2023)

\bibitem{deng2009imagenet}
Deng, J., Dong, W., Socher, R., Li, L.J., Li, K., Fei-Fei, L.: Imagenet: A large-scale hierarchical image database. In: 2009 IEEE conference on computer vision and pattern recognition. pp. 248--255. Ieee (2009)

\bibitem{dosovitskiy2020image}
Dosovitskiy, A., Beyer, L., Kolesnikov, A., Weissenborn, D., Zhai, X., Unterthiner, T., Dehghani, M., Minderer, M., Heigold, G., Gelly, S., et~al.: An image is worth 16x16 words: Transformers for image recognition at scale. In: International conference on learning representations (ICLR) (2020)

\bibitem{d2021convit}
d’Ascoli, S., Touvron, H., Leavitt, M.L., Morcos, A.S., Biroli, G., Sagun, L.: Convit: Improving vision transformers with soft convolutional inductive biases. In: International Conference on Machine Learning. pp. 2286--2296. PMLR (2021)

\bibitem{federer1969geometric}
Federer, H.: Geometric Measure Theory. Classics in Mathematics, Springer Berlin Heidelberg (1969)

\bibitem{fu2022patch}
Fu, Y., Zhang, S., Wu, S., Wan, C., Lin, Y.: Patch-fool: Are vision transformers always robust against adversarial perturbations? In: International conference on learning representations (ICLR) (2022)

\bibitem{goodfellow2014explaining}
Goodfellow, I.J., Shlens, J., Szegedy, C.: Explaining and harnessing adversarial examples. In: International conference on learning representations (ICLR) (2015)

\bibitem{he2016deep}
He, K., Zhang, X., Ren, S., Sun, J.: Deep residual learning for image recognition. In: Proceedings of the IEEE conference on computer vision and pattern recognition. pp. 770--778 (2016)

\bibitem{hein2017formal}
Hein, M., Andriushchenko, M.: Formal guarantees on the robustness of a classifier against adversarial manipulation. Advances in neural information processing systems  \textbf{30} (2017)

\bibitem{Jere2019imagenette}
Howard, J.: Imagenette. \url{https://github.com/fastai/imagenette} (2019)

\bibitem{huster2019limitations}
Huster, T., Chiang, C.Y.J., Chadha, R.: Limitations of the lipschitz constant as a defense against adversarial examples. In: ECML PKDD 2018 Workshops: Nemesis 2018, UrbReas 2018, SoGood 2018, IWAISe 2018, and Green Data Mining 2018, Dublin, Ireland, September 10-14, 2018, Proceedings 18. pp. 16--29. Springer (2019)

\bibitem{kim2021lipschitz}
Kim, H., Papamakarios, G., Mnih, A.: The lipschitz constant of self-attention. In: International Conference on Machine Learning. pp. 5562--5571. PMLR (2021)

\bibitem{krizhevsky2009learning}
Krizhevsky, A., Hinton, G., et~al.: Learning multiple layers of features from tiny images  (2009)

\bibitem{leino2021globally}
Leino, K., Wang, Z., Fredrikson, M.: Globally-robust neural networks. In: International Conference on Machine Learning. pp. 6212--6222. PMLR (2021)

\bibitem{liu2021swin}
Liu, Z., Lin, Y., Cao, Y., Hu, H., Wei, Y., Zhang, Z., Lin, S., Guo, B.: Swin transformer: Hierarchical vision transformer using shifted windows. In: Proceedings of the IEEE/CVF international conference on computer vision. pp. 10012--10022 (2021)

\bibitem{lovisotto2022give}
Lovisotto, G., Finnie, N., Munoz, M., Mummadi, C.K., Metzen, J.H.: Give me your attention: Dot-product attention considered harmful for adversarial patch robustness. In: Proceedings of the IEEE/CVF Conference on Computer Vision and Pattern Recognition. pp. 15234--15243 (2022)

\bibitem{van2008visualizing}
Van~der Maaten, L., Hinton, G.: Visualizing data using t-sne. Journal of machine learning research  \textbf{9}(11) (2008)

\bibitem{madry2017towards}
Madry, A., Makelov, A., Schmidt, L., Tsipras, D., Vladu, A.: Towards deep learning models resistant to adversarial attacks. In: International conference on learning representations (ICLR) (2018)

\bibitem{mises1929praktische}
Mises, R., Pollaczek-Geiringer, H.: Praktische verfahren der gleichungsaufl{\"o}sung. ZAMM-Journal of Applied Mathematics and Mechanics/Zeitschrift f{\"u}r Angewandte Mathematik und Mechanik  \textbf{9}(1),  58--77 (1929)

\bibitem{mo2022adversarial}
Mo, Y., Wu, D., Wang, Y., Guo, Y., Wang, Y.: When adversarial training meets vision transformers: Recipes from training to architecture. In: Advances in neural information processing systems (NeurIPS) (2022)

\bibitem{perturbation}
Murdock, J.A.: Perturbations: Theory and Methods. Society for Industrial and Applied Mathematics (1999). \doi{10.1137/1.9781611971095}, \url{https://epubs.siam.org/doi/abs/10.1137/1.9781611971095}

\bibitem{naseer2021intriguing}
Naseer, M.M., Ranasinghe, K., Khan, S.H., Hayat, M., Shahbaz~Khan, F., Yang, M.H.: Intriguing properties of vision transformers. Advances in Neural Information Processing Systems  \textbf{34},  23296--23308 (2021)

\bibitem{nguyen2015deep}
Nguyen, A., Yosinski, J., Clune, J.: Deep neural networks are easily fooled: High confidence predictions for unrecognizable images. In: Proceedings of the IEEE conference on computer vision and pattern recognition. pp. 427--436 (2015)

\bibitem{pang2022robustness}
Pang, T., Lin, M., Yang, X., Zhu, J., Yan, S.: Robustness and accuracy could be reconcilable by (proper) definition. In: International Conference on Machine Learning. pp. 17258--17277. PMLR (2022)

\bibitem{papernot2016towards}
Papernot, N., McDaniel, P., Sinha, A., Wellman, M.: Towards the science of security and privacy in machine learning. arXiv preprint arXiv:1611.03814  (2016)

\bibitem{papernot2016distillation}
Papernot, N., McDaniel, P., Wu, X., Jha, S., Swami, A.: Distillation as a defense to adversarial perturbations against deep neural networks. In: 2016 IEEE symposium on security and privacy (SP). pp. 582--597. IEEE (2016)

\bibitem{paul2022vision}
Paul, S., Chen, P.Y.: Vision transformers are robust learners. In: Proceedings of the AAAI Conference on Artificial Intelligence. pp. 2071--2081. No.~2 (2022)

\bibitem{qi2023lipsformer}
Qi, X., Wang, J., Chen, Y., Shi, Y., Zhang, L.: Lipsformer: Introducing lipschitz continuity to vision transformers. In: International conference on Learning Representations (ICLR) (2023)

\bibitem{radford2021learning}
Radford, A., Kim, J.W., Hallacy, C., Ramesh, A., Goh, G., Agarwal, S., Sastry, G., Askell, A., Mishkin, P., Clark, J., et~al.: Learning transferable visual models from natural language supervision. In: International conference on machine learning. pp. 8748--8763. PMLR (2021)

\bibitem{shao2021adversarial}
Shao, R., Shi, Z., Yi, J., Chen, P.Y., Hsieh, C.J.: On the adversarial robustness of vision transformers. arXiv preprint arXiv:2103.15670  (2021)

\bibitem{simonyan2014very}
Simonyan, K., Zisserman, A.: Very deep convolutional networks for large-scale image recognition. arXiv preprint arXiv:1409.1556  (2014)

\bibitem{strudel2021segmenter}
Strudel, R., Garcia, R., Laptev, I., Schmid, C.: Segmenter: Transformer for semantic segmentation. In: Proceedings of the IEEE/CVF international conference on computer vision. pp. 7262--7272 (2021)

\bibitem{szegedy2015going}
Szegedy, C., Liu, W., Jia, Y., Sermanet, P., Reed, S., Anguelov, D., Erhan, D., Vanhoucke, V., Rabinovich, A.: Going deeper with convolutions. In: Proceedings of the IEEE conference on computer vision and pattern recognition. pp.~1--9 (2015)

\bibitem{szegedy2013intriguing}
Szegedy, C., Zaremba, W., Sutskever, I., Bruna, J., Erhan, D., Goodfellow, I., Fergus, R.: Intriguing properties of neural networks. arXiv preprint arXiv:1312.6199  (2013)

\bibitem{takase2022layer}
Takase, S., Kiyono, S., Kobayashi, S., Suzuki, J.: On layer normalizations and residual connections in transformers. arXiv preprint arXiv:2206.00330  (2022)

\bibitem{touvron2021training}
Touvron, H., Cord, M., Douze, M., Massa, F., Sablayrolles, A., J{\'e}gou, H.: Training data-efficient image transformers \& distillation through attention. In: International conference on machine learning. pp. 10347--10357. PMLR (2021)

\bibitem{touvron2021going}
Touvron, H., Cord, M., Sablayrolles, A., Synnaeve, G., J{\'e}gou, H.: Going deeper with image transformers. In: Proceedings of the IEEE/CVF International Conference on Computer Vision. pp. 32--42 (2021)

\bibitem{tramer2017ensemble}
Tram{\`e}r, F., Kurakin, A., Papernot, N., Goodfellow, I., Boneh, D., McDaniel, P.: Ensemble adversarial training: Attacks and defenses. In: Ensemble adversarial training: Attacks and defenses (2018)

\bibitem{vaswani2017attention}
Vaswani, A., Shazeer, N., Parmar, N., Uszkoreit, J., Jones, L., Gomez, A.N., Kaiser, {\L}., Polosukhin, I.: Attention is all you need. Advances in neural information processing systems  \textbf{30} (2017)

\bibitem{wang2023exploring}
Wang, H., Deng, Y., Yoo, S., Lin, Y.: Exploring robust features for improving adversarial robustness. arXiv preprint arXiv:2309.04650  (2023)

\bibitem{wang2022deepnet}
Wang, H., Ma, S., Dong, L., Huang, S., Zhang, D., Wei, F.: Deepnet: Scaling transformers to 1,000 layers. arXiv preprint arXiv:2203.00555  (2022)

\bibitem{wang2022can}
Wang, Z., Bai, Y., Zhou, Y., Xie, C.: Can cnns be more robust than transformers? arXiv preprint arXiv:2206.03452  (2022)

\bibitem{xie2017mitigating}
Xie, C., Wang, J., Zhang, Z., Ren, Z., Yuille, A.: Mitigating adversarial effects through randomization. In: International conference on learning representations (ICLR) (2018)

\bibitem{xu2021cdtrans}
Xu, T., Chen, W., Wang, P., Wang, F., Li, H., Jin, R.: Cdtrans: Cross-domain transformer for unsupervised domain adaptation. In: International conference on learning representations (ICLR) (2022)

\bibitem{yoshida2017spectral}
Yoshida, Y., Miyato, T.: Spectral norm regularization for improving the generalizability of deep learning. arXiv preprint arXiv:1705.10941  (2017)

\bibitem{zhang2019theoretically}
Zhang, H., Yu, Y., Jiao, J., Xing, E., El~Ghaoui, L., Jordan, M.: Theoretically principled trade-off between robustness and accuracy. In: International conference on machine learning. pp. 7472--7482. PMLR (2019)

\bibitem{zhang2019recurjac}
Zhang, H., Zhang, P., Hsieh, C.J.: Recurjac: An efficient recursive algorithm for bounding jacobian matrix of neural networks and its applications. In: Proceedings of the AAAI Conference on Artificial Intelligence. vol.~33, pp. 5757--5764 (2019)

\bibitem{zheng2016improving}
Zheng, S., Song, Y., Leung, T., Goodfellow, I.: Improving the robustness of deep neural networks via stability training. In: Proceedings of the ieee conference on computer vision and pattern recognition. pp. 4480--4488 (2016)

\bibitem{zhou2022understanding}
Zhou, D., Yu, Z., Xie, E., Xiao, C., Anandkumar, A., Feng, J., Alvarez, J.M.: Understanding the robustness in vision transformers. In: International Conference on Machine Learning. pp. 27378--27394. PMLR (2022)

\end{thebibliography}
\newpage
\appendix
\section{General Optimization Objective for MSVP}\label{sec:msvp_general}
Here we provide the  most general form of our proposed \textbf{Maximum Singular Value Penalization (MSVP)}. 
In this context, we employ three hyperparameters to flexibly adjust the strength of the maximum singular value penalties for different attention branches. This aspect is crucial for achieving improved performance and better adaptability within the self-attention mechanism, playing distinct and non-interchangeable roles for $\mathbf{W}^Q$, $\mathbf{W}^K$, and $\mathbf{W}^V$. Denote the classification loss as $\mathcal{L}_{cls}$ such as cross-entropy loss, the overall general training objective  with MSVP is:
\begin{equation}
    \mathcal{J} = \mathcal{L}_{cls}+\mathcal{L}_{msvp}  = \mathcal{L}_{cls}+\lambda_q \cdot\sigma^2_{\max}(\Wq)+\lambda_k \cdot\sigma^2_{\max}(\Wk)+\lambda_v\cdot \sigma^2_{\max}(\Wv),
\end{equation}
where $\lambda_q,\lambda_k$ and $\lambda_v$ are trade-off parameters.
The original Transformer~\cite{vaswani2017attention} adopts multi-head self-attention to jointly attend to information from different subspaces at different positions.
In line with that spirit, MSVP can also be added in a multi-head manner.
Incorporating the summation over all the heads and layers, the overall training objective  with multi-head MSVP is:
\begin{equation}
\resizebox{.93\textwidth}{!}{$
    \mathcal{J}=\mathcal{L}_{cls}+\mathcal{L}_{msvp}  =  \mathcal{L}_{cls}+\lambda_q \cdot\sum\limits_{l=1}^L\sum\limits_{h=1}^H\sigma^2_{\max}(\mathbf{W}_l^{Q,h})+\lambda_k \cdot\sum\limits_{l=1}^L\sum\limits_{h=1}^H\sigma^2_{\max}(\mathbf{W}_l^{K,h})+\lambda_v\cdot \sum\limits_{l=1}^L\sum\limits_{h=1}^H\sigma^2_{\max}(\mathbf{W}_l^{V,h}),$}
\end{equation}
where $\mathbf{W}_l^{*,h}$ denotes the $h^{th}$ head in the $l^{th}$ attention layer, and $*$ could be $Q,K$ or $V$. By restricting the maximum size of the perturbation's largest singular value, we can control the range of the output changes when attacked, leading to a model with better robustness.The computation of maximum singular values can be seamlessly integrated into the forward process.

\section{Proof of Theorem 2}
\label{sec:theorem1}
In this section, we derive the local Lipschitz constant upper bound for the self-attention mechanism \textbf{(Attn)} around input $\mathbf{X}_0$. 
\begin{mythm}[Local Lipschitz Constant of self-attention layer]
    The self-attention layer is local Lipschitz continuous in $B_2\left(\mathbf{X}_0, \delta_0\right)$ 
\begin{equation}
    {\operatorname{Lip}_2(\operatorname{Attn},\mathbf{X}_0)} \leq  N(N+1)\left(B+\delta_0\right)^2\left[\left\|\Wv\right\|_2\left\|\Wq\right\|_2\left\|\mathbf{W}^{K,\top}\right\|_2 + \left\|\Wv\right\|_2\right].
\end{equation}
\end{mythm}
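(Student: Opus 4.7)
The plan is to invoke Theorem~\ref{thm:jacobian} in an integrated-differential form: bound $\|\operatorname{Attn}(\mathbf{X}) - \operatorname{Attn}(\mathbf{Y})\|_F$ for arbitrary $\mathbf{X}, \mathbf{Y} \in B_2(\mathbf{X}_0, \delta_0)$ by a constant times $\|\mathbf{X}-\mathbf{Y}\|_F$, rather than compute the Jacobian entry-wise. Write $\operatorname{Attn}(\mathbf{X}) = \mathbf{P}(\mathbf{X})\,\mathbf{X}\mathbf{W}^V$, where $\mathbf{P}(\mathbf{X}) := \operatorname{softmax}\!\bigl(\mathbf{X}\mathbf{W}^Q(\mathbf{X}\mathbf{W}^K)^\top/\sqrt{D}\bigr)$ is the row-stochastic attention matrix. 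Then apply the product-rule splitting
\begin{equation*}
\operatorname{Attn}(\mathbf{X}) - \operatorname{Attn}(\mathbf{Y}) \;=\; \mathbf{P}(\mathbf{X})(\mathbf{X}-\mathbf{Y})\mathbf{W}^V \;+\; \bigl(\mathbf{P}(\mathbf{X}) - \mathbf{P}(\mathbf{Y})\bigr)\mathbf{Y}\mathbf{W}^V,
\end{equation*}
so the bound naturally decomposes into a ``value-branch'' contribution and a ``score-branch'' contribution, exactly matching the two summands in the bracket of the target inequality.

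For the value-branch term, I would use that each row of $\mathbf{P}(\mathbf{X})$ is a probability vector, so $\|\mathbf{P}(\mathbf{X})\|_2 \le \sqrt{N}$ (rows have unit $\ell_1$-norm, hence unit $\ell_2$-norm, so Frobenius and therefore spectral norm are bounded by $\sqrt{N}$); together with submultiplicativity this yields a bound proportional to $\sqrt{N}\,\|\mathbf{W}^V\|_2\|\mathbf{X}-\mathbf{Y}\|_F$. For the score-branch term, the idea is to show row-wise that softmax is $1$-Lipschitz in $\ell_2$ (its Jacobian $\operatorname{diag}(p) - pp^\top$ has spectral norm $\le 1$), then push the Lipschitz estimate through the bilinear score map $\mathbf{S}(\mathbf{X}) := \mathbf{X}\mathbf{W}^Q\mathbf{W}^{K,\top}\mathbf{X}^\top/\sqrt{D}$. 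A second telescoping $\mathbf{S}(\mathbf{X}) - \mathbf{S}(\mathbf{Y}) = (\mathbf{X}-\mathbf{Y})\mathbf{W}^Q\mathbf{W}^{K,\top}\mathbf{X}^\top + \mathbf{Y}\mathbf{W}^Q\mathbf{W}^{K,\top}(\mathbf{X}-\mathbf{Y})^\top$ exposes the quadratic dependence; bounding $\|\mathbf{X}\|_F, \|\mathbf{Y}\|_F \le B + \delta_0$ inside the ball produces the $(B+\delta_0)^2$ factor (and the $\|\mathbf{W}^Q\|_2\|\mathbf{W}^{K,\top}\|_2$ product). Finally, bounding $\|\mathbf{Y}\mathbf{W}^V\|_F$ by $(B+\delta_0)\|\mathbf{W}^V\|_2$ and collecting all constants should give the desired estimate, with the $N$-factors from converting row-wise softmax Lipschitzness to a Frobenius bound on $\mathbf{P}(\mathbf{X})-\mathbf{P}(\mathbf{Y})$ combining to the $N(N+1)$ prefactor.

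The main obstacle will be the bookkeeping around the softmax step: softmax is $1$-Lipschitz per row in the $\ell_2$ norm, but summing over $N$ rows and propagating through the bilinear score map must be done carefully to avoid over-counting. Specifically, I need $\|\mathbf{P}(\mathbf{X}) - \mathbf{P}(\mathbf{Y})\|_F \lesssim \|\mathbf{S}(\mathbf{X}) - \mathbf{S}(\mathbf{Y})\|_F$ with an explicit constant, and then combine with the quadratic bound on $\|\mathbf{S}(\mathbf{X})-\mathbf{S}(\mathbf{Y})\|_F \lesssim (B+\delta_0)\|\mathbf{W}^Q\|_2\|\mathbf{W}^{K,\top}\|_2\|\mathbf{X}-\mathbf{Y}\|_F$; the $1/\sqrt{D}$ in the softmax argument and the $\sqrt{N}$-type constants from norm conversions must be tracked so the final coefficient is exactly $N(N+1)$ rather than a looser polynomial in $N$. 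The value-branch part is routine by comparison; the softmax Lipschitz lemma is the technical core.
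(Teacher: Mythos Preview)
Your approach is correct but takes a genuinely different route from the paper. The paper computes the block Jacobian $\mathbf{J}_{ij}(\mathbf{X}) = \partial f_i/\partial \mathbf{x}_j$ explicitly: writing $f_i(\mathbf{X}) = \mathbf{P}_{i:}\mathbf{X}\mathbf{W}^V$ and differentiating through softmax via $\partial\,\mathrm{softmax}(\mathbf{q})/\partial\mathbf{q} = \operatorname{diag}(\mathbf{P}_{i:}) - \mathbf{P}_{i:}\mathbf{P}_{i:}^\top$, it treats the bilinear score $\mathbf{q} = \mathbf{X}\mathbf{A}\mathbf{x}_i$ (with $\mathbf{A} = \mathbf{W}^Q\mathbf{W}^{K,\top}/\sqrt{D}$) in the two cases $i=j$ and $i\neq j$, bounds each block row $\|[\mathbf{J}_{i1},\ldots,\mathbf{J}_{iN}]\|_2$ by the triangle inequality, and then sums over $i$. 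The factor $(N{+}1)$ appears because the diagonal block carries two score terms ($\mathbf{E}_{ii}\mathbf{X}\mathbf{A}^\top$ and $\mathbf{X}\mathbf{A}$, from differentiating $\mathbf{x}_i$ in both slots of the bilinear form) while each of the $N{-}1$ off-diagonal blocks carries one; the outer $N$ comes from summing the $N$ row bounds.

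Your product-rule telescoping is more elementary and will in fact deliver a \emph{tighter} constant than $N(N{+}1)$. Row-wise $1$-Lipschitzness of softmax already gives $\|\mathbf{P}(\mathbf{X})-\mathbf{P}(\mathbf{Y})\|_F \le \|\mathbf{S}(\mathbf{X})-\mathbf{S}(\mathbf{Y})\|_F$ with constant~$1$ (square the per-row inequality and sum), so no $N$-factor enters at that step; the bilinear telescoping of $\mathbf{S}$ contributes $2(B+\delta_0)/\sqrt{D}$, and the value branch contributes only $\sqrt{N}$ from $\|\mathbf{P}\|_2 \le \sqrt{N}$. You will land near $\sqrt{N}\,\|\mathbf{W}^V\|_2 + \tfrac{2}{\sqrt{D}}(B+\delta_0)^2\|\mathbf{W}^V\|_2\|\mathbf{W}^Q\|_2\|\mathbf{W}^{K,\top}\|_2$. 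Hence the ``main obstacle'' you flag---recovering exactly $N(N{+}1)$---will not materialize: you prove a stronger inequality, which a fortiori establishes the stated bound. The paper's looser constant stems from bounding each $P_{ij}I$ contribution separately by $\|\mathbf{W}^V\|_2$ (rather than exploiting $\sum_j P_{ij}=1$), dropping the $1/\sqrt{D}$, and directly summing the row bounds instead of taking their root-mean-square.
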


\begin{proof}
    
To begin, we introduce some fundamental notations and re-formulate the self-attention mechanism as follows:
\begin{gather}
\begin{aligned}
    \operatorname{Attn}\left(\mathbf{X}, \mathbf{W}^Q, \mathbf{W}^K, \mathbf{W}^V\right)&=\operatorname{softmax}\left(\frac{\mathbf{X} \mathbf{W}^Q\left(\mathbf{X} \mathbf{W}^K\right)^{\top}}{\sqrt{D}}\right) \mathbf{X} \mathbf{W}^V,\\&=\bP \mathbf{X} \mathbf{W}^V,
\end{aligned}
\end{gather}
where we use $\bP$ to denote the softmax matrix for brevity. We can further express the attention mechanism as a collection of row vector functions $f_i(\X)$:
\begin{align}
f(\mathbf{X}) = \operatorname{Attn}\left(\mathbf{X}\right) = & \bP \mathbf{X} \mathbf{W}^V = \begin{bmatrix}
    f_1(\mathbf{X}) \\
    \vdots \\
    f_N(\mathbf{X})
\end{bmatrix} \in \mathbb{R}^{N \times D}, \text{ where $f_{i}(\X) \in \mathbb{R}^{1\times D}$},
\end{align}
We denote the input data matrix $\bX$ as a collection of row vectors and the $(i,j)$ element of $\bP$ as $P_{ij}$:
\begin{equation*}
\bX=\left[\begin{array}{c}  \mathbf{x}_{1}^{\top} \\  \vdots \\  \mathbf{x}_{N}^{\top} \end{array}\right] \in \mathbb{R}^{N \times d}, \text{ where $\mathbf{x}_{i}^\top \in \mathbb{R}^{1\times d}$},
\end{equation*}
Then we have the representation for the row vector function $f_i(\X)$:
\begin{align}
    f_i(\X) = \sum_{j=1}^N P_{ij}\mathbf{x}_j^\top \Wv,
\end{align}
Denote the transpose of the $i^{th}$ row of the softmax matrix $\bP$ as $\bP_{i:}^\top=\operatorname{softmax}(\X\A\x_i)\in\mathbb{R}^{N\times 1}$, where $\A$ stands for $\frac{\Wq \mathbf{W}^{K,\top}}{\sqrt{D}}$ , we can further write it as a matrix multiplication form:
\begin{equation}
    f_i(\X) = \bP_{i:}\bX\Wv\in\mathbb{R}^{1\times D},
\end{equation}
To calculate the Jacobian of the attention mechanism, we need to introduce some preliminaries.
Since $f$ is a map from $\mathbb{R}^{N \times D}$ to $\mathbb{R}^{N \times d}$, its Jacobian is
\begin{equation}
    \J_f(\mathbf{X}) = \begin{bmatrix}
    \J_{11}(\mathbf{X}) & \dots & \J_{1N}(\mathbf{X}) \\
    \vdots & \ddots & \vdots \\
    \J_{N1}(\mathbf{X}) & \dots & \J_{NN}(\mathbf{X}) \\
    \end{bmatrix}\in \mathbb{R}^{ND \times Nd},
\end{equation}
where $\smash{\J_{ij}(\mathbf{X}) = \frac{\partial f_i(\X)}{\partial \x_j} \in \mathbb{R}^{D\times d}}$.

Recall that 
\begin{equation}
    f_i(\X) = \sum_{j=1}^N P_{ij}\mathbf{x}_j^\top \Wv= \bP_{i:}\bX\Wv\in\mathbb{R}^{1\times D},
\end{equation}
where $P_{ij}$ is a function that depends on $\X$: $\bP_{i:}^\top=\operatorname{softmax}(\X\A\x_i)\in\mathbb{R}^{N\times 1}$. Denote $\X\A\x_i=\mathbf{q}$ for brevity, by applying the chain rule and product rule, we obtain a commonly used result that can be applied as follows:
\begin{equation}\label{eq:softmax-deriv}
\frac{\partial \bP_{i:}}{\partial \mathbf{q}}=\frac{\partial \operatorname{softmax}(\mathbf{q})}{\partial \mathbf{q}}=\operatorname{diag}(\bP_{i:})-\bP_{i:} \bP_{i:}^{\top}:=\bP^{(i)},
\end{equation}
By utilizing this result, we can continue to calculate the Jacobian,
\begin{equation}
\begin{aligned}
\frac{\partial f_i(\X)}{\partial \x_j} & =\sum_{k=1}^N \Wv\left(\frac{\partial P_{i k}}{\partial \x_j} {\x}_k+P_{i k} \frac{\partial {\x}_k}{\partial {\x}_j}\right), \\
& =\Wv\sum_{k=1,k\neq j}^N\frac{\partial P_{i k}}{\partial \x_j} {\x}_k+\Wv\frac{\partial P_{i j}}{\partial \x_j} {\x}_j+\Wv P_{i j} I,\\
& =\Wv\sum_{k=1}^N\frac{\partial P_{i k}}{\partial \x_j} {\x}_k+\Wv P_{i j} I,
\end{aligned}
\end{equation}
We can further write it into a matrix form for the following derivation:
\begin{equation}\label{eq:step2}
\begin{aligned}
\frac{\partial f_i(\X)}{\partial \x_j} & =\Wv\sum_{k=1}^N\frac{\partial P_{i k}}{\partial \x_j} {\x}_k+\Wv P_{i j} I=\Wv\left[\x_1 \ldots \x_N\right]\left[\begin{array}{c}
\frac{\partial P_{i1}}{\partial \x_j} \\
\vdots \\
\frac{\partial P_{i N}}{\partial \x_j}
\end{array}\right]+\Wv P_{i j} I ,\\
& =\Wv \X^{\top}\left[\begin{array}{c}
\frac{\partial P_{i1}}{\partial \x_j} \\
\vdots \\
\frac{\partial P_{i N}}{\partial \x_j}
\end{array}\right]+\Wv P_{i j} I=\Wv \left(\underbrace{\X^{\top} \frac{\partial \bP_{i:}^\top}{\partial \x_j}}_{(\text{\uppercase\expandafter{\romannumeral1}})}+\underbrace{P_{i j} I}_{(\text{\uppercase\expandafter{\romannumeral2}})} \right),
\end{aligned}
\end{equation}
By utilizing the chain rule again and with the result from equation (\ref{eq:softmax-deriv}), we can further decompose the first term $(\text{\uppercase\expandafter{\romannumeral1}})$ as 
\begin{equation}\label{eq:seg2}
\begin{aligned}
\X^{\top} \frac{\partial \bP_{i:}^\top}{\partial \x_j}=\X^{\top} \frac{\partial \bP_{i:}^\top}{\partial \mathbf{q}}\frac{\partial \mathbf{q}}{\partial \x_j}=\X^{\top}\bP^{(i)}\frac{\partial \mathbf{q}}{\partial \x_j},
\end{aligned}
\end{equation}
Recall that $\mathbf{q}=\X\A\x_i$, we can analyze $\frac{\partial \mathbf{q}}{\partial \x_j}=\frac{\partial \X\A\x_i}{\partial \x_j}$ in two cases:

\begin{itemize}[leftmargin=*]
\setlength\itemsep{0em}
    \item Case 1: if $i\neq j$,
    \begin{equation}\label{eq:case1}
    \begin{aligned}
    \frac{\partial \X\A\x_i}{\partial \x_j}
    =\left[\begin{array}{c}
    \left(\frac{\partial \x_1^{\top} \A \x_i}{\partial \x_j}\right)^{\top} \\
    \vdots\\
    \left(\frac{\partial \x_N^{\top} \A \x_i}{\partial \x_j}\right)^{\top}
    \end{array}\right]
    =\left[\begin{array}{c}
    \mathbf{0}^{\top} \\
    .. \\
    \left(\A \x_i\right)^{\top} \\
    ..\\
    \mathbf{0}^{\top}
    \end{array}\right]
    =\left[\begin{array}{c}
    \mathbf{0}^{\top} \\
    .. \\
     \x_i^{\top}\A^{\top} \\
    ..\\
    \mathbf{0}^{\top}
    \end{array}\right]
    =\left[\begin{array}{c}
    \mathbf{0}^{\top} \\
    .. \\
     \x_i^{\top} \\
    ..\\
    \mathbf{0}^{\top}
    \end{array}\right]\A^{\top}
    =\mathbf{E}_{j i} \X \A^{\top},
    \end{aligned}
    \end{equation}
    where $\mathbf{E}_{ji}$ represents the all-zero matrix except for the entry at $(j,i)$, which is equal to $1$.\\
    \item Case 2: if $i=j$,
    \begin{equation}
    \begin{aligned}
    \frac{\partial \X\A\x_i}{\partial \x_i}
    =\left[\begin{array}{c}
    \left(\frac{\partial \x_1^{\top} \A \x_i}{\partial \x_i}\right)^{\top} \\
    \vdots\\
    \left(\frac{\partial \x_N^{\top} \A \x_i}{\partial \x_i}\right)^{\top}
    \end{array}\right],
    \end{aligned}
    \end{equation}
    Note that for $k\neq i$, we have 
    \begin{equation}
        \frac{\partial \x_k^{\top} \A \x_i}{\partial \x_i}=\left(\x_k^{\top} \A\right)^{\top}=\A^{\top} \x_k,
    \end{equation}
     when $k=i$, we have 
     \begin{equation}
         \frac{\partial \x_i^{\top} \A \x_i}{\partial \x_i}=\A \x_i+\left(\x_i^{\top} \A\right)^{\top}=\A \x_i+\A^{\top} \x_i,
     \end{equation}
     Therefore, 
     \begin{equation}\label{eq:case2}
    \begin{aligned}
    \frac{\partial \X\A\x_i}{\partial \x_i}
    &=\left[\begin{array}{c}
    \left(\frac{\partial \x_1^{\top} \A \x_i}{\partial \x_i}\right)^{\top} \\
    \vdots\\
    \left(\frac{\partial \x_N^{\top} \A \x_i}{\partial \x_i}\right)^{\top}
    \end{array}\right]
    =\left[\begin{array}{c}
    (\A^{\top} \x_1)^{\top} \\
    .. \\
     (\A \x_i+\A^{\top} \x_i)^{\top} \\
    ..\\
    (\A^{\top} \x_N)^{\top}
    \end{array}\right]
    =\left[\begin{array}{c}
     \x_1^{\top}\A^{\top} \\
    .. \\
     \x_i^\top \A + \x_i^\top \A^{\top}\\
    ..\\
    \x_N^{\top}\A^{\top}
    \end{array}\right],\\
    &=\left[\begin{array}{c}
    \mathbf{0}^{\top} \\
    .. \\
     \x_i^{\top}\A^{\top} \\
    ..\\
    \mathbf{0}^{\top}
    \end{array}\right]+\X\A=\mathbf{E}_{ii}\X\A^\top+\X\A,
    \end{aligned}
    \end{equation}
\end{itemize}
By combining the two cases above, we can express $\frac{\partial \mathbf{q}}{\partial \x_j}$ using a unified formula:
\begin{equation}\label{eq:case-all}
\frac{\partial \mathbf{q}}{\partial \x_j}=\mathbf{E}_{ji} \X \A^{\top}+\X \A \delta_{ij},
\end{equation}
where $\delta_{ij}$ is the Kronecker delta function, which takes the value $1$ when $i=j$ and $0$ otherwise.

Combining equation \ref{eq:seg2}, \ref{eq:case-all}, we have
\begin{equation}
\begin{aligned}
\X^{\top} \frac{\partial \bP_{i:}^\top}{\partial \x_j}=\X^{\top} \frac{\partial \bP_{i:}^\top}{\partial \mathbf{q}}\frac{\partial \mathbf{q}}{\partial \x_j}=\X^{\top}\bP^{(i)}\frac{\partial \mathbf{q}}{\partial \x_j}=\X^{\top}\bP^{(i)}(\mathbf{E}_{ji} \X \A^{\top}+\X \A \delta_{ij}),
\end{aligned}
\end{equation}

Substituting this result back into equation \ref{eq:step2}, we can get the $(i,j)$ block of the Jacobian:
\begin{equation}\label{eq:step3}
\begin{aligned}
\J_{ij}(\mathbf{X})=\frac{\partial f_i(\X)}{\partial \x_j} &=\Wv \left(\X^{\top} \frac{\partial \bP_{i:}^\top}{\partial \x_j}+P_{i j} I \right),\\
&=\Wv \left(\X^{\top}\bP^{(i)}(\mathbf{E}_{ji} \X \A^{\top}+\X \A \delta_{ij})+P_{i j} I \right), \quad (\forall 1 \leq i,j\leq N)
\end{aligned}
\end{equation}
Specifically, we can write the diagnol $(i,i)$ block of the Jacobian as 
\begin{equation}\label{eq:step4}
\begin{aligned}
\J_{ii}(\mathbf{X})=\Wv \left(\X^{\top}\bP^{(i)}(\mathbf{E}_{ii} \X \A^{\top}+\X \A )+P_{i i} I \right),
\end{aligned}
\end{equation}
while the non-diagnoal $(i,j),j\neq i$ block can be written as 
\begin{equation}\label{eq:step5}
\begin{aligned}
\J_{ij}(\mathbf{X})=\Wv \left(\X^{\top}\bP^{(i)}\mathbf{E}_{ij} \X \A^{\top} +P_{i j} I \right),
\end{aligned}
\end{equation}
Let us denote the $i^{th}$ block row of the Jacobian $\J$ as $\left[\J_{i1},\cdots,\J_{iN}\right]$. We state the following lemma, which establishes a connection between the spectral norm of a block matrix and its block rows:
\begin{lemma}[Relationship between the spectral norm of a block row and the block matrix,~\cite{kim2021lipschitz}]\label{lemma}
    Let $\A$ be a block matrix with block columns $\A_1, \ldots, \A_N$. Then $\|A\|_2 \leq \sqrt{\sum_i \|\A_i\|_2^2}$.
\end{lemma}
Utilizing this lemma, and consider the input around {\footnotesize $\mathbf{X}_0$}: 
$$B_2\left(\mathbf{X}_0, \delta_0\right):=\left\{\mathbf{X}:\left\|\mathbf{X}-\mathbf{X}_0\right\|_F \leq \delta_0\right\},$$ we can focus on derive the spectral norm of the $i^{th}$ block row $\left[\J_{i1},\cdots,\J_{iN}\right]$ in the neighbourhood. Considering that all inputs received by ViT are bounded (for instance, all entries of image data fall within the range of $0$ to $255$), and the inputs entering the Attention layer have been normalized by LayerNorm to a specific range $[0, 1]$, we can further replace the norm $\|\mathbf{X}\|$ with a constant $B = \max_{\mathbf{X}\in \mathcal{X}} \|\mathbf{X}\|_2$, where $\mathcal{X}$ represents a bounded open set in the Euclidean space $\mathbb{R}^{N\times D}$.
{\footnotesize
\begin{equation}
\label{eq:lipnorm2}
\renewcommand*{\arraystretch}{2.2}
\begin{array}{l}
\setlength{\jot}{10pt}
\left\|\left[\bJ_{i 1}(\mathbf{X}), \ldots, \bJ_{i N}(\mathbf{X})\right]\right\|_{2} \Bigg|_{\mathbf{X}\in B_2\left(\mathbf{X}_0, \delta_0\right)} \\
\quad \leq\left\|\bJ_{i i}(\mathbf{X})\right\|_{2}+\sum_{j \neq i}\left\|\bJ_{i j}(\mathbf{X})\right\|_{2}\Bigg|_{\mathbf{X}\in B_2\left(\mathbf{X}_0, \delta_0\right)}, \\ 
\quad 
=\|\mathbf{W}^V\left(\mathbf{X}^{\top} \mathbf{P}^{(i)}\left(\mathbf{E}_{i i} \mathbf{X} \mathbf{A}^{\top}+\mathbf{X} \mathbf{A}\right)+P_{i i} I\right)\|_2+\sum_{j \neq i}\|\mathbf{W}^V\left(\mathbf{X}^{\top} \mathbf{P}^{(i)} \mathbf{E}_{i j} \mathbf{X} \mathbf{A}^{\top}+P_{i j} I\right)\|_2
\\
\quad 
\leq   \left\|\Wv\right\|_2\left\|\bP^{(i)}\right\|_2\left(\left\|\mathbf{E}_{i i}\right\|_2\left\|\Wq\right\|_2\left\|\mathbf{W}^{K,\top}\right\|_2+\left\|\Wq\right\|_2\left\|\mathbf{W}^{K,\top}\right\|\right)\left(B+\delta_0\right)^2+\left\|\Wv\right\|_2,
\\ 
\quad \quad \sum_{j \neq i}   \left[\left\|\Wv\right\|_2\left\|\bP^{(i)}\right\|_2\left\|\mathbf{E}_{i j}\right\|_2\left\|\Wq\right\|_2\left\|\mathbf{W}^{K,\top}\right\|_2 + \left\|\Wv\right\|_2\right]\left(B+\delta_0\right)^2,
\end{array}
\end{equation}
}
The first equality is in accordance with equation \ref{eq:step4},\ref{eq:step5}, while the second inequality arises from the Cauchy-Schwarz inequality and the boundedness of the input space $\mathcal{X}$ and the perturbation $\delta_0$. Furthermore, by utilizing $\left\|\bP^{(i)}\right\|_2\leq 1$ and $\left\|\mathbf{E}_{i j}\right\|_2\leq 1$, we can omit them in the inequality, leave with pure weight matrices,
\begin{equation}
\label{eq:lipnorm3}
\renewcommand*{\arraystretch}{2.2}
\begin{array}{l}
\setlength{\jot}{10pt}
\left\|\left[\bJ_{i 1}(\mathbf{X}), \ldots, \bJ_{i N}(\mathbf{X})\right]\right\|_{2}\Bigg|_{\mathbf{X}\in B_2\left(\mathbf{X}_0, \delta_0\right)}  \\ 
\leq   2\left(B+\delta_0\right)^2\left\|\Wv\right\|_2\left(\left\|\Wq\right\|_2\left\|\mathbf{W}^{K,\top}\right\|_2+1\right)+
\\ 
\quad
(N-1)  \left(B+\delta_0\right)^2\left[\left\|\Wv\right\|_2\left\|\Wq\right\|_2\left\|\mathbf{W}^{K,\top}\right\|_2 + \left\|\Wv\right\|_2\right],\\
= (N+1)\left(B+\delta_0\right)^2\left[\left\|\Wv\right\|_2\left\|\Wq\right\|_2\left\|\mathbf{W}^{K,\top}\right\|_2 + \left\|\Wv\right\|_2\right],
\end{array}
\end{equation}
By directly summing across the row index or utilizing the lemma \ref{lemma} can yield our main theorem.
\begin{equation}
\label{eq:lipnorm4}
\renewcommand*{\arraystretch}{2.2}
\begin{array}{l}
\setlength{\jot}{10pt}
\left\|\J\right\|_{2}  
\leq   
N(N+1)\left(B+\delta_0\right)^2\left[\left\|\Wv\right\|_2\left\|\Wq\right\|_2\left\|\mathbf{W}^{K,\top}\right\|_2 + \left\|\Wv\right\|_2\right].
\end{array}
\end{equation}
This ends the proof. 

\end{proof}

\section{Convergence Guarantee of Power Iteration Methods}\label{sec:power-iter-proof}
\begin{theorem}[Convergence guarantee of the power iteration method~\cite{mises1929praktische}]   \label{thm:guarantee}
    Assuming the dominant singular value $\sigma_{max}(\A)$ is strictly greater than the subsequent singular values and that $\bmu_0$ is initially selected at random, then there is a probability of $1$ that $\bmu_0$ will have a non-zero component in the direction of the eigenvector linked with the dominant singular value. Consequently, the convergence will be geometric with a ratio of $\left|\frac{\sigma_{2}(\A)}{\sigma_{max}(\A)}\right|$.
\end{theorem}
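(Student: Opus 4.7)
}
The plan is to reduce the power iteration on $\A$ (which alternates $\bmv \leftarrow \A^\top \bmu$, $\bmu \leftarrow \A \bmv$ and then evaluates the Rayleigh-type quantity $\bmu^\top \A \bmv$) to the classical symmetric power iteration on $\mathbf{B} \coloneqq \A^\top \A$. Indeed, one pass of the two-sided update composes to $\bmv \leftarrow \mathbf{B} \bmv$ (up to normalization), and the eigenvalues of $\mathbf{B}$ are precisely $\sigma_i(\A)^2$ with eigenvectors equal to the right singular vectors $\mathbf{v}_i$ of $\A$. Thus it suffices to establish the stated convergence for the symmetric iteration $\bmv_{k+1} = \mathbf{B}\bmv_k / \|\mathbf{B}\bmv_k\|$ under the gap assumption $\sigma_{\max}(\A) > \sigma_2(\A)$.

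First I would handle the ``probability one'' claim. Writing $\bmv_0 = \sum_{i} c_i \mathbf{v}_i$ in the orthonormal right-singular basis, the condition that $\bmv_0$ has a nontrivial component along $\mathbf{v}_1$ is $c_1 \neq 0$. The set $\{\bmv_0 : c_1 = 0\}$ is the hyperplane $\mathbf{v}_1^\perp$, which has Lebesgue measure zero in $\sR^n$ (and spherical measure zero on the unit sphere). Hence for any initialization whose law is absolutely continuous with respect to Lebesgue/spherical measure (e.g.\ i.i.d.\ Gaussian entries), $c_1 \neq 0$ almost surely.

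Next I would establish the geometric rate. Iterating gives $\mathbf{B}^k \bmv_0 = \sum_i c_i \sigma_i^2{}^{k} \mathbf{v}_i = \sigma_1^{2k}\bigl(c_1 \mathbf{v}_1 + \sum_{i\geq 2} c_i (\sigma_i/\sigma_1)^{2k} \mathbf{v}_i\bigr)$. After normalization,
\begin{equation*}
\bmv_k = \frac{c_1 \mathbf{v}_1 + \sum_{i\geq 2} c_i (\sigma_i/\sigma_1)^{2k}\mathbf{v}_i}{\bigl\|c_1 \mathbf{v}_1 + \sum_{i\geq 2} c_i (\sigma_i/\sigma_1)^{2k}\mathbf{v}_i\bigr\|},
\end{equation*}
so the error $\sin\angle(\bmv_k,\mathbf{v}_1)$ is bounded by $(|c|/|c_1|)(\sigma_2/\sigma_1)^{2k}$, where $|c|^2 = \sum_{i\geq 2} c_i^2$. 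An analogous estimate holds for $\bmu_k$ with respect to the left singular vector $\mathbf{u}_1$. The Rayleigh quotient estimate $\sigma^{(k)} = \bmu_k^\top \A \bmv_k$ then satisfies $|\sigma^{(k)} - \sigma_{\max}(\A)| = O\bigl((\sigma_2/\sigma_1)^{2k}\bigr)$ by a standard first-order perturbation computation, which is geometric with ratio $|\sigma_2(\A)/\sigma_{\max}(\A)|$ (the square appears in the vector error but the ratio for the scalar estimate is commonly quoted as the first power; I would state both for clarity).

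The main obstacle I anticipate is purely notational: carefully aligning the two-sided iteration used in Alg.~\ref{alg:spectral} with the one-sided analysis above, and handling the degenerate but measure-zero event $c_1 = 0$. The algebra is routine once the SVD decomposition is in place; the gap condition $\sigma_{\max}(\A) > \sigma_2(\A)$ is exactly what prevents the leading term from being dominated by competing directions. A brief remark would note that without the strict-gap assumption, convergence may fail (any vector in the top eigenspace is a fixed direction), which is consistent with the hypothesis.
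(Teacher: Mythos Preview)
Your proposal is correct and follows essentially the same approach as the paper: expand the initial vector in the eigenbasis of $\A^\top\A$, argue that the leading coefficient is nonzero almost surely for a random initialization (measure-zero hyperplane), and read off geometric decay of the subdominant components with ratio governed by $\sigma_2/\sigma_{\max}$. Your write-up is in fact more careful than the paper's in explicitly identifying the two-sided update with one step of symmetric power iteration on $\A^\top\A$ and in distinguishing the squared ratio for the vector error from the stated scalar ratio.
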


\begin{proof}
    Denote $\sigma_1,\sigma_2,\cdots,\sigma_m$ as the $m$ eigenvalues of matrix $\A^{\top}\A$, and $\mathbf{v}_1,\mathbf{v}_2,\cdots,\mathbf{v}_m$ as the corresponding eigenvectors. Suppose $\sigma_1$ is the dominant eigenvalue, denote as $\sigma_{max}=\sigma_1$, with $|\sigma_1|>|\sigma_j|$ for $\forall j>1$. The initial vector $\bmu_0$ can be written as the linear combination of the eigenvectors: $\bmu_0=c_1 \mathbf{v}_1+c_2 \mathbf{v}_2+\cdots+c_m \mathbf{v}_m$. If $\bmu_0$ is chosen randomly with uniform probability, then the eigenvector corresponding to the largest singular value has a nonzero coefficient, namely $c_1\neq 0$. After multiplying the initial vector $\bmu_0$ with the matrix $\A$ $k$ times, we have:
    \begin{equation}
    \begin{aligned}
    \A^k \bmu_0 & =c_1 \A^k \mathbf{v}_1+c_2 \A^k \mathbf{v}_2+\cdots+c_m \A^k \mathbf{v}_m, \\
    & =c_1 \sigma_1^k \mathbf{v}_1+c_2 \sigma_2^k \mathbf{v}_2+\cdots+c_m \sigma_m^k \mathbf{v}_m, \\
    & =c_1 \sigma_1^k\left(\mathbf{v}_1+\frac{c_2}{c_1}\left(\frac{\sigma_2}{\sigma_1}\right)^k \mathbf{v}_2+\cdots+\frac{c_m}{c_1}\left(\frac{\sigma_m}{\sigma_1}\right)^k \mathbf{v}_m\right), \\
    & \rightarrow c_1 \sigma_1^k \mathbf{v}_1 \quad(k\to\infty).
    \end{aligned}
    \end{equation}
    The second equality holds for the eigenvectors with $\A^k \mathbf{v}_i=\sigma_i^k \mathbf{v}_i$, $\forall i=1,\cdots,m$, while the last equality is valid when $ \left|\frac{\sigma_i}{\sigma_1}\right|<1$ for all $i>1$.

    On the other hand, the vector for iterative step $k$ can be written as $\bmu_k=\frac{A^k \bmu_0}{\left\|A^k \bmu_0\right\|}$. Combining these two equations above, we obtain $\bmu_k\to C \mathbf{v}_1$ as $k\rightarrow \infty$, where $C$ is a constant. Therefore, we can use the power iteration method to approximate the largest singular value. The convergence is geometric, with a ratio of $\left|\frac{\sigma_2}{\sigma_1}\right|$.
\end{proof}
\section{Comparison with existing bounds}\label{app:compare_bounds}


In \cite{kim2021lipschitz}, it was demonstrated that the self-attention mechanism, due to the potential unboundedness of its inputs, is not globally Lipschitz continuous. Instead, they introduced an L2 self-attention mechanism based on the L2 distance $\left\|\mathbf{x}_i^{\top} W^Q-\mathbf{x}_j^{\top} W^K\right\|_2^2$. However, their proposed L2 attention mechanism also lacks global Lipschitz continuity in cases where $\mathbf{W}^K\neq\mathbf{W}^Q$, possessing global Lipschitz continuity only when $\mathbf{W}^K=\mathbf{W}^Q$. This constraint imposes significant limitations on various expressive capabilities of the model itself, as demonstrated by the experimental results presented in the original paper's Appendix L. Confining the model to $\mathbf{W}^K=\mathbf{W}^Q$ results in a substantially higher loss during convergence compared to the unconstrained model, rendering the model suboptimal.

In \cite{dasoulas2021lipschitz}, the authors proposed a normalization approach, denoted as {\footnotesize $g(\mathbf{X})=\frac{\tilde{g}(\mathbf{X})}{c(\mathbf{X})}$}, where {\footnotesize $g(X)$} represents the score function in the softmax operation, to address the issue of potentially unbounded inputs, as mentioned earlier. However, their analysis is based on a simplified version of the attention mechanism, which significantly differs from the practical application of the self-attention mechanism. Additionally, through their choice of the function $c(\mathbf{X})$, it appears that the authors have implicitly assumed that the input is bounded. They defined $c(\mathbf{X})$ as follows: 
 {\footnotesize $c(\mathbf{X})=\max \{\|\mathbf{Q}\|_F \left\|\mathbf{K}^{\top}\right\|_{(\infty, 2)}, \|\mathbf{Q}\|_F \left\|\mathbf{V}^{\top}\right\|_{(\infty, 2)}, \left\|\mathbf{K}^{\top}\right\|_{(\infty, 2)} \left\|\mathbf{V}^{\top}\right\|_{(\infty, 2)}\}$}. Here, the input matrix $\mathbf{X}$ is represented as {\footnotesize $\mathbf{X}=(\mathbf{Q}\|\mathbf{K}\| \mathbf{V})$}. It's worth noting that this choice of $c(\mathbf{X})$ seems to assume boundedness in the input data, which may not fully address the issue of unbounded inputs in practical self-attention mechanisms.

In \cite{qi2023lipsformer}, the authors introduced a series of modules to replace components in the vanilla Vision Transformer that they deemed to introduce instability during training. These modifications include using CenterNorm to replace LayerNorm, employing scaled cosine similarity attention (SCSA) as an alternative to vanilla self-attention, and utilizing weighted residual shortcuts controlled by additional learnable parameters, along with spectral initialization for convolutions and feed-forward connections, all aimed at ensuring that the Lipschitz constant for each component remains below $1$ for training stability.

However, it is noteworthy that these improvements, as presented, appear unnecessary and excessively restrictive in terms of the model's expressiveness. For instance, the authors created CenterNorm and SCSA by merely shifting the operation of dividing by the standard deviation from LayerNorm to the attention layer, resulting in no substantial improvement. Furthermore, the $\ell_2$ row normalization applied to the $\mathbf{Q},\mathbf{K},\mathbf{V}$ matrices strongly impacts the overall expressiveness of the model, imposing overly stringent constraints.

Additionally, the introduction of the extra parameter alpha in the weighted residual shortcut to control the influence of the residual path on the entire pathway, while aiming for contractive Lipschitz properties, restricts $\mathbf{\alpha}$ to be learned within a predefined, highly limited range or even kept fixed. These operations are cumbersome and introduce additional computational overhead.

Lastly, the proposed spectral initialization, though effective in ensuring a Lipschitz constant of 1 for convolutions and feed-forward parts, is extremely time-consuming during the initialization phase. Despite these extensive operations, the SCSA module in the authors' paper still requires \textbf{bounded norm} values for $\mathbf{W}^Q,\mathbf{W}^K,\mathbf{W}^V$ to satisfy Lipschitz properties. Our observation posits that, in practical scenarios, due to the Lipschitz continuity of LayerNorm, each entry of the input image (e.g., pixel values ranging between $0$ and $255$) is inherently bounded. Moreover, the LayerNorm applied to the input before entering the attention layer constrains the input values to the $[0, 1]$ range. Therefore, when contemplating the Lipschitz continuity of the attention layer in practical terms, it suffices to consider bounded conditions. To delve further, in the context of adversarial robustness, we only need to consider pointwise Lipschitz continuity and, more specifically, the local Lipschitz continuity around a fixed point. This Lipschitz continuity can be reliably guaranteed.

\section{Experiments}

\subsection{Hyperparameter Analysis}
\label{sec-append-hyper}

In this section, we present the results of a hyperparameter ablation study. From Table \ref{tab:ablate}, it is evident that the performance of our proposed SpecFormer remains robust even with varying hyperparameters. We recommend a tuning range of $[1e-6,1e-3]$ for optimal results. When penalties that are too large, such as $1e-2$, can result in overly constrained representations, whereas penalties that are too small, such as $1e-7$, may have little to no effect.

\begin{table}[htbp]
\caption{Performance (\%) of SpecFormer on CIFAR10 under different hyperparameter choices for both standard training and adversarial training. The best results are highlighted in \textbf{bold}.}
\label{tab:ablate}
\centering
\begin{tabular}{cccccccc}
\toprule
\multirow{2}*{$(\lambda_q,\lambda_k,\lambda_v)$}&\multicolumn{3}{c}{CIFAR-10-Standard Training}&&\multicolumn{3}{c}{CIFAR-10-Adversarial Training}\\
\cmidrule{2-4}\cmidrule{6-8}&Standard&PGD-2&FGSM&&Standard&CW-20&PGD-20\\
\midrule
(1e-4,0,0)&87.36	&35.32&	\textbf{53.91}
&&72.54	&31.47&	31.59
\\
(0,1e-4,0)&88.58	&29.02	&49.81
&&\textbf{72.56}	&31.37	&31.19\\
(0,0,1e-4)&87.66 &	\textbf{36.19} &	52.40 
&&72.51&	31.41	&31.65
\\
(5e-4,	7e-5	,2e-4)&88.32&	31.42	&48.82
&&72.53	&31.59	&31.92
\\
(1e-4,1e-4,1e-4)&88.14	&36.07	&53.71
&&72.46	&\textbf{31.94}	&32.23
\\
(2e-4,5e-5,7e-5)&88.61	&30.01&	48.31
&&72.42	&31.56	&31.82
\\
(1e-3,9e-5,3e-4)&88.15	&31.75&	50.46&&72.43	&31.91&	32.23
\\
(1e-3,1e-3,1e-3)&88.07	&32.59	&47.85&&71.70&	31.31	&31.79
\\
(5e-4,3e-4,3e-4)&\textbf{88.81}&	31.28	&48.32&&72.19&	31.69&	31.89
\\
(5e-5,3e-5,3e-5)&88.52&	29.53&	50.58&&72.55&	31.68	&31.72
\\
(2e-3,3e-4,4e-4)&88.29	&33.25&	48.25&&71.75&	31.75&	\textbf{32.29}
\\
(4e-3,8e-4,1e-3)&88.29	&33.94	&46.72&&71.37	&31.84	&32.19
\\
\bottomrule
\end{tabular} 
\end{table}

\subsection{Implementation Details}

The experiments were conducted on a system equipped with an Intel(R) Xeon(R) CPU E5-2687W v4 @ 3.00GHz and NVIDIA TITAN RTX. The abbreviation ViT-S stands for the Vision Transformer~\cite{dosovitskiy2020image} Small backbone, which comprises of $8$ attention blocks with $8$ heads and an embedding dimension of $768$. The DeiT-Ti backbone, on the other hand, refers to the DeiT~\cite{touvron2021training} Tiny model, which comprises of $12$ attention blocks with $3$ heads and an embedding dimension of $192$. Finally, ConViT-Ti refers to the ConViT~\cite{d2021convit} Tiny model, comprising of $10$ layers with $4$ heads and an embedding size of $48$.

\end{document}